\pdfoutput=1
\documentclass[runningheads,envcountsect,envcountsame]{llncs}

\PassOptionsToPackage{dvipsnames,table}{xcolor}
\usepackage{eccv}
\usepackage{eccvabbrv}

\usepackage{xspace}
\newcommand{\ours}{\textbf{MoVA}\xspace}

\usepackage{booktabs}
\usepackage{multirow}

\newcommand{\Frst}[1]{\textbf{#1}}
\newcommand{\Scnd}[1]{\underline{#1}}
\definecolor{aliceblue}{rgb}{0.94, 0.97, 1.0}

\definecolor{pastelgreen}{RGB}{188, 212, 187}
\definecolor{darkgreen}{RGB}{0,100,0}
\definecolor{softblue}{RGB}{70, 136, 207}        
\definecolor{dustyrose}{RGB}{220, 140, 156}      
\definecolor{sageleaf}{RGB}{136, 176, 75}        
\definecolor{goldenyellow}{RGB}{255, 198, 109}   
\definecolor{lavender}{RGB}{150, 123, 182}       
\definecolor{seafoam}{RGB}{98, 182, 177}         
\definecolor{terracotta}{RGB}{226, 114, 91}      
\definecolor{periwinkle}{RGB}{143, 161, 215}     
\definecolor{mintgreen}{RGB}{152, 203, 178}      
\definecolor{mauve}{RGB}{224, 176, 187}          

\definecolor{reviewerRone}{RGB}{0, 158, 115}     
\definecolor{reviewerRtwo}{RGB}{0, 114, 178}     
\definecolor{reviewerRthree}{RGB}{156, 32, 7} 

\usepackage{array}
\usepackage{rotating}
\usepackage{float}
\usepackage{lipsum}

\makeatletter
\@namedef{ver@everyshi.sty}{}
\makeatother
\usepackage{tikz}
\usetikzlibrary{bayesnet,arrows,backgrounds}

\usepackage{tcolorbox}

\usepackage{graphicx}
\usepackage{subcaption}
\renewcommand{\paragraph}[1]{\noindent\textbf{#1}}

\newcommand{\stkout}[1]{\ifmmode\text{\sout{\ensuremath{#1}}}\else\sout{#1}\fi}
\usepackage{amsmath,amsfonts,bm}

\providecommand{\1}{\mathbf{1}}

\providecommand{\mm}{\mathbf{m}}

\providecommand{\tt}{\mathbf{t}}

\providecommand{\zz}{\mathbf{z}}

\providecommand{\cB}{\mathcal{B}}

\providecommand{\cK}{\mathcal{K}}

\providecommand{\cM}{\mathcal{M}}

\providecommand{\cS}{\mathcal{S}}
\providecommand{\cT}{\mathcal{T}}

\providecommand{\cV}{\mathcal{V}}
\providecommand{\cX}{\mathcal{X}}

\providecommand{\cZ}{\mathcal{Z}}

\providecommand{\R}{\mathbb{R}} 

\providecommand{\mT}{\mathbf{T}}

\providecommand{\mV}{\mathbf{V}}

\usepackage[noend]{algpseudocode}

\errorcontextlines\maxdimen

\makeatletter
    \newcommand*{\algrule}[1][\algorithmicindent]{\makebox[#1][l]{\hspace*{.5em}\thealgruleextra\vrule height \thealgruleheight depth \thealgruledepth}}%
\newcommand*{\thealgruleextra}{}
\newcommand*{\thealgruleheight}{.75\baselineskip}
\newcommand*{\thealgruledepth}{.25\baselineskip}

\newcount\ALG@printindent@tempcnta
\def\ALG@printindent{%
    \ifnum \theALG@nested>0
        \ifx\ALG@text\ALG@x@notext
        \else
            \unskip
            \addvspace{-1pt}
            \ALG@printindent@tempcnta=1
            \loop
                \algrule[\csname ALG@ind@\the\ALG@printindent@tempcnta\endcsname]%
                \advance \ALG@printindent@tempcnta 1
            \ifnum \ALG@printindent@tempcnta<\numexpr\theALG@nested+1\relax
            \repeat
        \fi
    \fi
    }%
\usepackage{etoolbox}
\patchcmd{\ALG@doentity}{\noindent\hskip\ALG@tlm}{\ALG@printindent}{}{\errmessage{failed to patch}}
\makeatother

\newbox\statebox
\newcommand{\myState}[1]{%
    \setbox\statebox=\vbox{#1}%
    \edef\thealgruleheight{\dimexpr \the\ht\statebox+1pt\relax}%
    \edef\thealgruledepth{\dimexpr \the\dp\statebox+1pt\relax}%
    \ifdim\thealgruleheight<.75\baselineskip
        \def\thealgruleheight{\dimexpr .75\baselineskip+1pt\relax}%
    \fi
    \ifdim\thealgruledepth<.25\baselineskip
        \def\thealgruledepth{\dimexpr .25\baselineskip+1pt\relax}%
    \fi
    \State #1%
    \def\thealgruleheight{\dimexpr .75\baselineskip+1pt\relax}%
    \def\thealgruledepth{\dimexpr .25\baselineskip+1pt\relax}%
}

\usepackage{amsmath,amssymb}

\spnewtheorem{assumption}[theorem]{Assumption}{\bfseries}{\itshape}
\spnewtheorem{condition}[theorem]{Condition}{\bfseries}{\itshape}

\usepackage{enumitem}
\usepackage{soul}
\usepackage{multirow}
\usepackage{algorithm}
\usepackage{caption}
\usepackage{wrapfig}

\providecommand{\rtt}{r_{\mathrm{T}}}
\providecommand{\rii}{r_{\mathrm{I}}}

\renewcommand{\hm}{\hat{\mm}}
\providecommand{\hgV}{\hat{g}^{\mathrm{V}}}
\providecommand{\hgT}{\hat{g}^{\mathrm{T}}}
\providecommand{\hzV}{\hat{\zz}^{\mathrm{V}}}
\providecommand{\hzT}{\hat{\zz}^{\mathrm{T}}}

\providecommand{\Ltv}{\mathcal{L}_{\mathrm{align}}^{\mathrm{t} \to \mathrm{v}} }
\providecommand{\Lvt}{\mathcal{L}_{\mathrm{align}}^{\mathrm{v} \to \mathrm{t}}}
\providecommand{\Lsfdm}{\mathcal{L}_{\mathrm{sfdm}} }
\providecommand{\Ldfsm}{\mathcal{L}_{\mathrm{dfsm}} }
\providecommand{\Lr}{\mathcal{L}_{\mathrm{g}} }
\providecommand{\Lsp}{\mathcal{L}_{\mathrm{s}} }

\providecommand{\lamg}{\lambda_{\mathrm{g}} }
\providecommand{\lamtv}{\lambda_{\mathrm{tv}}}
\providecommand{\lamvt}{\lambda_{\mathrm{vt}}}
\providecommand{\lams}{\lambda_{\mathrm{s}} }

\providecommand{\vV}{\mathbf{V}} 
\providecommand{\vX}{\mathbf{X}} 
\providecommand{\vT}{\mathbf{T}} 

\providecommand{\zT}{\zz^{\mathrm{T}}} 
\providecommand{\zV}{\zz^{\mathrm{V}}} 

\providecommand{\mT}{\mm^{\mathrm{T}}} 
\providecommand{\mV}{\mm^{\mathrm{V}}} 

\providecommand{\veT}{\bm\epsilon^{\mathrm{T}}} 
\providecommand{\veV}{\bm\epsilon^{\mathrm{V}}} 

\providecommand{\gT}{g^{\mathrm{T}}} 
\providecommand{\gV}{g^{\mathrm{V}}} 

\IfFileExists{axessibility.sty}{\usepackage[accsupp]{axessibility}}{}
\usepackage{hyperref}
\IfFileExists{orcidlink.sty}{\usepackage{orcidlink}}{}

\title{MoVA: Learning Asymmetric Dual Projections for Modular Long Video-Text Alignment}
\titlerunning{MoVA: Modular Long Video--Text Alignment}

\author{Peiyuan~Zhu\inst{1} \and
Shaoan~Xie\inst{1,2} \and
Zijian~Li\inst{1,2} \and
Yifan~Shen\inst{1} \and
Namrata~Deka\inst{2} \and
Harsh~Shrivastava\inst{1} \and
Guangyi~Chen\inst{1,2} \and
Kun~Zhang\inst{1,2}}
\authorrunning{P.~Zhu et al.}
\institute{Mohamed bin Zayed University of Artificial Intelligence, Abu Dhabi, UAE\\
\email{\{peiyuan.zhu,zijian.li,yifan.shen\}@mbzuai.ac.ae}\\
\email{\{harsh.shrivastava,guangyi.chen\}@mbzuai.ac.ae}
\and
Carnegie Mellon University, Pittsburgh, PA, USA\\
\email{shaoan@cmu.edu, ndeka@cs.cmu.edu, kunz1@cmu.edu}}

\begin{document}
\maketitle
\begin{abstract}
Contrastive pre-training has propelled video-text alignment, yet models often inherit the critical limitations of their image-text predecessors like CLIP, resulting in entangled representations. These challenges are severely exacerbated by two fundamental properties in the video domain: Temporal Misalignment, where textual descriptions often correlate only to specific, constrained temporal windows, leaving other frames text-irrelevant; and Semantic Asymmetry, which dictates a sparse, bidirectional, and non-equivalent relevance between frame-level visual details and caption-level concepts. This failure persists whether captions are short and temporally disjoint, creating ambiguity, or long and detailed, fostering entanglement between static objects and their temporal evolution. In this paper, we establish theoretical conditions that enable flexible alignment between video and text representations across the temporal dimension and at varying levels of granularity. Building on these theoretical insights, we introduce MoVA—Modular Long Video–Text Alignment—which learns dual asymmetric projections: a text-side projection that adaptively selects frame-aware subspaces of the caption, and a video-side projection that disentangles text-relevant visual concepts.  Our framework ensures that the model can preserve global cross-modal semantics while disentangling evolving, frame-specific concepts and scale naturally to long captions and videos. Empirical evaluations show that MoVA outperforms existing methods in multiple video-text alignment tasks, demonstrating the effectiveness of our method.

\keywords{Video-Text Retrieval \and Multimodality \and Representation Learning}

\end{abstract}

\section{Introduction}
\label{sec:intro}
Developing generalizable video–text representations remains an important problem in modern computer vision. Driven by rapid advances in vision–language pre-training (VLP), large-scale cross-modal representation learning—coupled with explicit video–text alignment objectives—yields aligned embeddings that transfer broadly across a diverse spectrum of downstream tasks, including video-text retrieval~\cite{wu2021hanet,Wang2023AlignAT,Wang2021T2VLADGS,Chen2023TaggingBA,yang2024dgl}, video captioning~\cite{shi2023learning,jiang2025text}, and action recognition~\cite{zhang2024enhanced,chen2024align}.

\begin{figure}[t]
  \centering
   \includegraphics[width=0.95\linewidth]{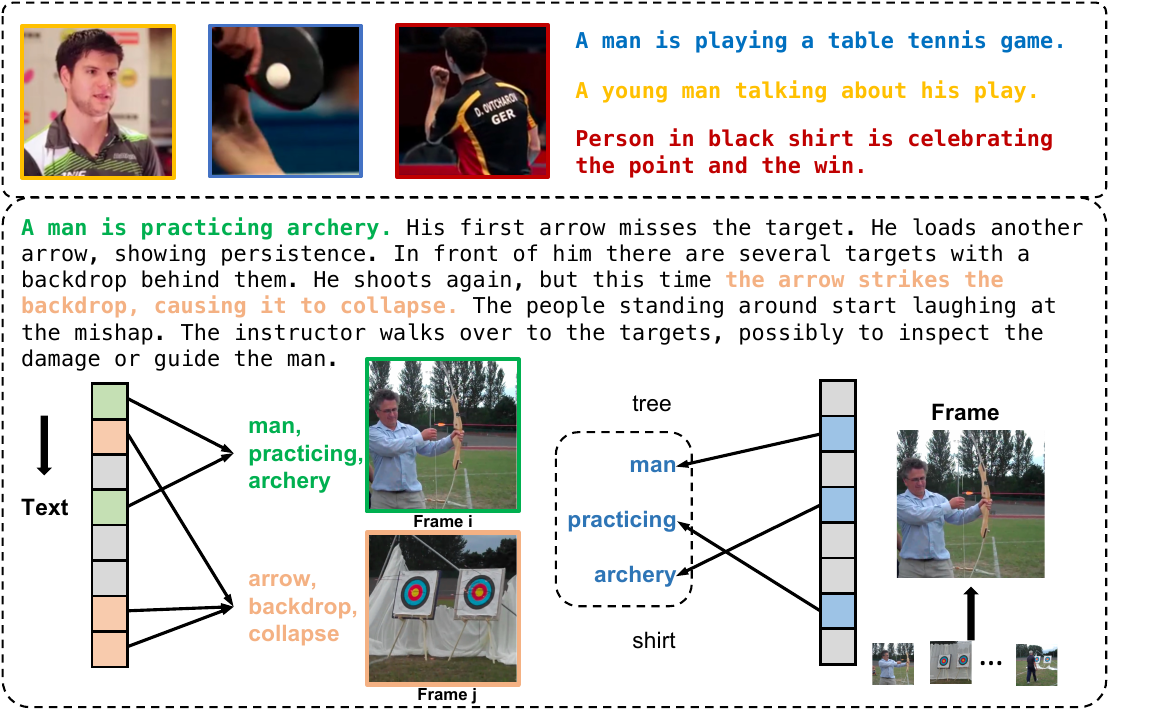}

   \caption{Illustration of two key challenges for video–language alignment.
(1) Temporal Misalignment: Video captions are inherently temporally misaligned with the underlying visual content. 
Multiple paired captions emphasize different moments along the video timeline—the first frame focuses on the man talking, while the second centers on the table tennis game. A single caption may describe an action occurring within only a short temporal window, leaving other frames potentially text-irrelevant. 
(2) Semantic Asymmetry: Regardless of caption length, only a sparse subset of the caption is relevant to any given frame. Text induces selective relevance, influencing each frame’s attention toward different textual components, whereas each frame preserves richer yet underdetermined information toward the correlated text, forming an inherent bidirectional asymmetry between the two modalities.  }
   \label{fig:intro-challenges}
\end{figure}

The success of CLIP-style contrastive learning~\cite{radford2021learning,oord2018representation,chen2020simple} motivated many video–text systems to reuse image–text encoders and perform lightweight post-pretraining.
However, CLIP itself can suffer from information misalignment in many image–text datasets and tends to learn entangled representations~\cite{fan2023improving,lai2024veclip,materzynska2022disentangling,thrush2022winoground,yuksekgonul2023when,lewis2024does}: different captions for the same image may highlight different concepts, and a single caption may involve multiple concepts. SmartCLIP~\cite{Xie2025SmartCLIPMV} addresses this issue with adaptive masking and a modular contrastive objective at the image-level, encouraging disentangled representations of images and texts.

When extending to video, additional temporal misalignment arises because a video is a sequence of frames, risking the loss of salient frame-level semantics and misinterpretation of temporal dynamics. On the one hand, short captions for a single video may describe disjoint temporal segments, leaving the model uncertain about which parts of the text to attend to for a given frame. On the other hand, directly aligning long captions with videos can preserve entangled details, preventing the learning of disentangled, atomic concepts at both the object and temporal levels, and ultimately limiting generalization on video tasks that demand fine-grained understanding across time and entities. We further characterize this as semantic asymmetry, shown in Figure~\ref{fig:intro-challenges}.

To mitigate the impact of the above factors, we formulate video–text alignment as a latent-variable identification problem over temporally indexed frames and textual spans. We establish conditions under which frame–text correspondences are recoverable from weak clip–caption supervision, enabling our framework to preserve information as it evolves over time and to disentangle object-centric from motion-centric factors.
Building on these theoretical insights, we introduce MoVA—a Modular Video–Text Alignment framework for CLIP-style video-text models.
Operationally, MoVA decomposes the text representation into concept modules and learns a frame-aware mask selector that adaptively activates the relevant textual components per frame. The modular contrastive objective is computed at the frame level and aggregated across clips with lightweight temporal-consistency and coverage regularization.
We empirically demonstrate that MoVA achieves competitive results across a range of downstream tasks without extra post-pretraining data, showcasing its effectiveness in addressing alignment challenges.

Our main contributions are summarized as follows.
\begin{itemize}[leftmargin=*,topsep=0pt]
  \item Theory for video--text misalignment and disentanglement.
  We formally characterize the challenges of information misalignment and semantic asymmetry in video--text alignment and cast the problem in a latent--variable framework and derive identification conditions that guarantee recovery of frame--text correspondences and concept-level factors.

  \item We propose dual asymmetric projections with modular contrastive learning procedures for video-text alignment, promoting disentangled, compositional representations. 

  \item We conduct extensive experiments across diverse tasks, including long and short video-text retrieval, text-to-video generation, and concept visualization. MoVA consistently achieves competitive results, demonstrating its efficacy and validating our theoretical contributions.

\end{itemize}

\section{Related Work}
\label{sec:formatting}

\subsection{Video-Text Alignment}
Modern video-text alignment research is heavily influenced by the wave of contrastive learning in the image-text domain.
CLIP demonstrated the immense potential of natural language supervision for training robust visual models. This image-text pre-training paradigm was rapidly adapted to the video domain in an end-to-end manner, outperforming traditional non-CLIP methods~\cite{patrick2020support,croitoru2021teachtext} and
serving as effective video learners~\cite{rasheed2023fine}.
The pioneering work CLIP4Clip~\cite{luo2022clip4clip} investigates various similarity calculation mechanisms and the effect of post-pretraining on video-language datasets, which spurs a significant volume of subsequent research into video-text alignment.
Regarding encoder extraction~\cite{xue2022clip,deng2023prompt,wang2022object} focus on building more precise and comprehensive encoders for video and text modalities.
For interaction modeling between text and video modalities, works like~\cite{ma2022x,wang2023unified,tian2024towards,yang2024dgl} learn various combinations of coarse- and fine-grained, as well as global and local representations. For instance, DGL~\cite{yang2024dgl} utilizes a shared latent space to generate dynamic local prompts and employs a global-local attention mechanism.
For temporal sequence alignment, DTW (Dynamic Time  
Warping)~\cite{sakoe2003dynamic} utilizes a symmetric distance definition and imposes strong temporal constraints to align the two sequences. VT-TWINS~\cite{Ko2022VideoTextRL} aligns noisy video-text pairs via a differentiable DTW that handles weak correlations using local neighborhood smoothing. Other works, such as~\cite{wang2022disentangled,Jin2023TextVideoRW}, have focused on learning disentangled representations for video-text alignment.
More recently, many approaches train larger models—even foundation models—on substantially richer video corpora, extending video–text alignment far beyond retrieval to domains such as video understanding~\cite{chen2023vast,wang2023internvid,wang2024videoclip,wang2024internvideo2} and video generation~\cite{bai2025qwen2,wan2025}. 
Long video–caption pairs are also becoming crucial: datasets like LVD-2M~\cite{xiong2024lvd}, VideoUFO~\cite{wang2025videoufo}, and UltraVideo~\cite{xue2025ultravideo} offer longer, more detailed, and richer video-caption pairs. Long captions are critical as they tend to retain entangled details~\cite{zhang2024long,Xie2025SmartCLIPMV}, and original videos with sufficient change and clear content help expand downstream applications of video–text alignment beyond the confines of retrieval.
Our method focuses on addressing the temporal misalignment and semantic asymmetry inherent in video-text pairs through mask modeling to learn temporally interpretable concepts for long video–text alignment.

\subsection{Latent Variable Identification}
Latent variable identification is the process of statistically inferring and uniquely recovering unobservable, hidden variables or constructs from a set of observable, measured data. 
Identifying the latent structure that mediates video–text correspondence is a principled route to robust alignment and generalization. A large body of work shows that, under suitable auxiliary signals or structural conditions, semantic factors become recoverable from observations. 
Recent research further demonstrates that latent causal variables become identifiable once additional structure is imposed on the learning problem. Temporal regularities—modeling how latent factors evolve causally over time or exhibit temporal sparsity—supply such structure and can break indeterminacies \cite{yao2021learning,yao2022learning,klindt2020towards,hyvarinen2017nonlinear}. Complementary advances constrain the generative mechanism itself: sparsity and related structural priors shrink equivalence classes and enable recovery of the underlying factors \cite{xu2024sparsity,zheng2022identifiability,zheng2023generalizing}. Cross-modal alignment also provides anchors: paired or multi-view observations allow contrastive or grouping-based objectives to tie representations across modalities and render the latent structure identifiable even under partial observability \cite{yao2023multi,hyvarinen2017nonlinear,morioka2023connectivity,morioka2023causal,daunhawer2023identifiability,sun2025causal,gresele2020incomplete,chen2025causalverse}. Under appropriate conditions, the identification of disentangled and manipulable latents enables controllable image and video generation \cite{xielearning,shen2025controllable,kim2025subject}.
For video–text alignment, the objective is to learn high-level semantics from low-level observational video–text pairs. These paired data can be treated as multi-view observations that enable identification of information shared across views. Previous work~\cite{daunhawer2023identifiability,von2021self} adopts flexible assumptions on the underlying distribution to identify blocks of latent variables directly shared by two views induced by data augmentations, and~\cite{yao2023multi} extends this to the multi-view setting. However, these approaches face persistent grouping ambiguity (e.g., whether captions from different videos belong to the same group). SmartCLIP~\cite{Xie2025SmartCLIPMV} approaches alignment from the image–text side, where the grouping of multiple captions—short or long—is coherently determined by preserved cross-modal information, allowing the image to unilaterally select aspects of the textual description. In contrast, video–text pairs are inherently bidirectional: the text abstracts transformations across frames and can even determine which frames are unimportant, rendering a simple one-to-one image-to-text mapping untenable. In our theoretical analysis, we show that by properly leveraging the data-generating process, we can obtain the desired identification results for video–text alignment.

\section{Problem Formulation}
\label{sec:formulation}
In this section, we formalize the data-generating process underlying video–text alignment as the basis for subsequent theoretical analysis.

\begin{figure}[t]
  \centering
   \includegraphics[width=0.7\linewidth]{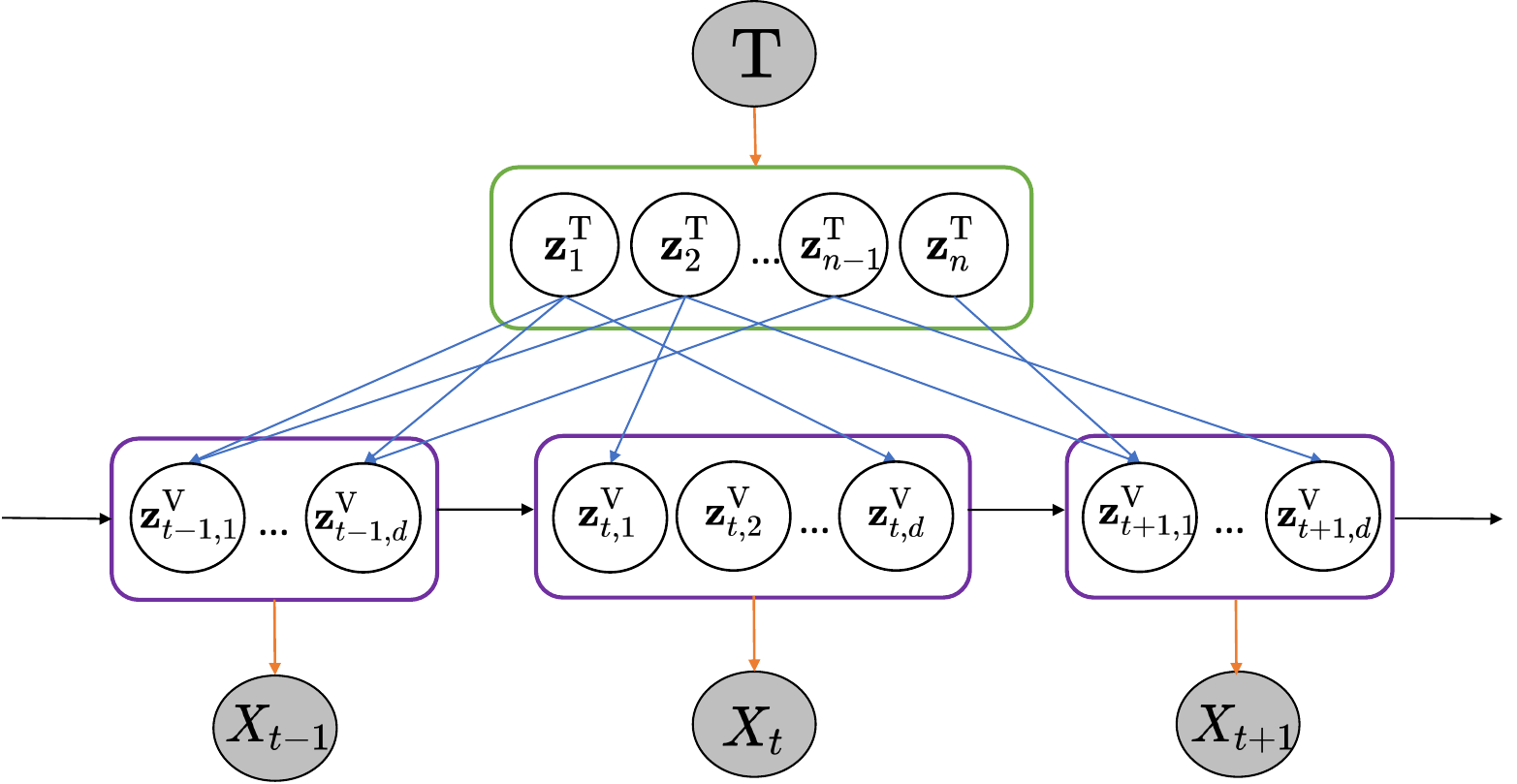}

   \caption{The \textbf{data-generating process}. The video-text pair originates from its corresponding representation pair.  
   The mapping from the text representation $\mathbf{z}^{\mathrm{T}}$ to the sequence of frame representations $\mathbf{z}^{\mathrm{V}}$ over time is sparse. The subset of text representations associated with a particular frame representation contains only partial information about the vision representation. }
   \label{fig:data-generating-process}
\end{figure}

\paragraph{Notations.}
We indicate the dimensionality of a vector with $d(\cdot)$ and index a subset of its components by $[\zz]_{\cB}$ for an index set $\cB$.
For any mask vector $\mm$, we write the support (nonzero indices) as $\cB(\mm):=\{i\in[d(\mm)]:[\mm]_i\neq 0\}$. We denote the element-wise (Hadamard) product by $\odot$.

\paragraph{Data-generating process.}
The data-generating process is shown in Figure~\ref{fig:data-generating-process} and specified in (\ref{eq:vt_scm})-(\ref{eq:vt_constraint}).
A video is $\vV:=(\vX_1,\ldots,\vX_T)$ with frames $\vX_t\in\cX\subset\R^{d(\vX_t)}$ and a paired caption $\vT\in\cT\subset\R^{d(\vT)}$.
Both modalities live in a \emph{shared} latent space $\cZ\subset\R^{d(\zz)}$: a text latent $\zT\in\cZ$ and per-frame latents $\zV_t\in\cZ$ for $t=1,\ldots,T$.
Since each word can be represented by a continuous word embedding vector in practice~\cite{bengio2003neural}, we model the text caption as continuous variables.
We assume that each video-text pair $(\vV,\vT)$ originates from semantic latents together with modality-specific nuisance variables through modality-specific generators: the video generator $\gV$ and the text generator $\gT$. Concretely, frames are generated by $\gV:(\zV_t,\veV_t)\mapsto \vX_t$ and the caption is generated by $\gT:(\zT,\veT)\mapsto \vT$.
To capture temporal locality and semantic asymmetry, we introduce \emph{dual sparse asymmetric} projections:
 \begin{enumerate}[label=\textit{\roman*},leftmargin=1em, topsep=1pt, partopsep=0pt, itemsep=-0.0em]
    \setlength\itemsep{-0.0em}
        \item \label{asmp:invertibility} Text$\to$frame mask $\mm_t^{\mathrm{T}}\in\cM^{\mathrm T}\subset\{0,1\}^{d(\zz)}$, which selects a subset of text concepts relevant to frame $t$ (global text $\to$ local frame).
        \item \label{asmp:full_support} Frame$\to$text mask $\mm_t^{\mathrm{V}}\in\cM^{\mathrm V}\subset\{0,1\}^{d(\zz)}$, which selects the subset of frame-level visual factors accountable to the subset of text concepts (local frame $\to$ local text).
\end{enumerate}
We assume modality-specific generators with nuisance variations $\veV_t$ (e.g., motion blur, illumination) and $\veT$ (e.g., syntax, tense), and encode frame-aware dual selection as a per-frame semantic-consistency constraint, as shown in (\ref{eq:vt_constraint}).
\begin{align}
\vX_t := \gV(\zV_t,\veV_t),\qquad \vT :&= \gT(\zT,\veT),\label{eq:vt_scm}\\
\mm_t^{\mathrm{T}}\odot \zT = \zV_t \odot \mm_t^{\mathrm{V}},\quad t&=1,\ldots,T.
\label{eq:vt_constraint}
\end{align}

\paragraph{Goal.} Our goals can be formalized as follows.
\begin{enumerate}[label=\alph*,leftmargin=1.2em,itemsep=0pt,topsep=2pt]
\item \textit{Preserve global cross-modal semantics over time}: recover the complete caption-level latent $\zT$ and maintain its consistency with the span $\zV_t \mid_{t=1}^T$.
\item \textit{Disentangle frame-specific concepts at multiple granularities}: identify and separate temporally localized factors within $\zV_t$ that correspond to sparse, caption-conditioned subspaces selected by $(\mm_t^{\mathrm{T}},\mm_t^{\mathrm{V}})$, even when such atomic factors are unseen during training.
\end{enumerate}

\paragraph{Examples.}
As illustrated in Figure~\ref{fig:data-generating-process}, the global caption contains two events: ``A man is practicing archery.'' and ``The arrow strikes the backdrop, causing it to collapse.'' 
Consider two frames at different time steps, indexed by $i$ and $j$ ($i<j$ in this case). From the text$\to$frame perspective,
for the frame $\vX_{i}$, the text$\to$frame mask $\mm_{i}^{\mathrm T}$ activates $\{\text{man},\text{practicing},\text{archery}\}$; for the later frame $\vX_{j}$, $\mm_{j}^{\mathrm T}$ activates $\{\text{arrow},\text{backdrop},\text{collapse}\}$. 
From the frame$\to$text perspective, each frame may include additional visual content beyond what is explicitly mentioned (e.g., trees, shirt for $\vX_{i}$). 
Accordingly, $\mm_{i}^{\mathrm V}$ selects the subset of visual factors in $\vX_{i}$ that correspond to the text subset chosen at time $i$ (e.g., archer’s posture, bow, arrow motion rather than background trees), while $\mm_{j}^{\mathrm V}$ prioritizes the arrow–backdrop contact and collapse dynamics at time $j$.

\section{Identification Theory}
\label{sec:theory}
We establish the theoretical guarantees that motivate the modular design in Section~\ref{sec:mova}. We show that a suitable objective recovers the latent semantics shared by video and text up to block-wise equivalence, even though captions provide only weak, temporally sparse supervision.
\begin{definition}[Block-wise Identifiability]
\label{def:block_identifiability}
A latent vector $\mathbf{v}$ is block-wise identifiable if it is related to its estimate $\hat{\mathbf{v}}$ through an invertible map on every block selected by the associated mask $\,\mm$. 
\end{definition}

\paragraph{Learning objective.}
Let $\hzV_t=\hgV(\mathbf{X}_t)$ and $\hzT=\hgT(\mathbf{T})$ denote the encoder outputs defined in Section~\ref{sec:mova}. We estimate frame-wise masks $\hat{\mm}^{\mathrm{V}}_t,\hat{\mm}^{\mathrm{T}}_t\in\mathcal{M}$ by solving
\begin{equation}
\label{eq:contrastive_objective_mova}
\begin{split}
\min_{\hgV,\hgT,\{\hat{\mm}^{\mathrm{V}}_t,\hat{\mm}^{\mathrm{T}}_t\}} \quad &
\sum_t \big(\|\hat{\mm}^{\mathrm{V}}_t\|_{0}+\|\hat{\mm}^{\mathrm{T}}_t\|_{0}\big) \\
\text{s.t.} \quad 
\sum_t &\big\|\hzV_t\odot \hat{\mm}^{\mathrm{V}}_t - \hzT\odot \hat{\mm}^{\mathrm{T}}_t\big\|^2<\eta.
\end{split}
\end{equation}
The constraint captures the limiting form of the modular contrastive loss in Section~\ref{sec:mova}, forcing the dual-mask relation between $\hat{\mm}^{\mathrm{T}}_t\odot \hzT $ and $ \hzV_t \odot \hat{\mm}^{\mathrm{V}}_t$ to be close for every frame. The sparsity objective encourages minimal supports on both sides so that only frame-relevant coordinates remain active.

\begin{condition}[Identification Conditions]
\label{cond:identification_mova}
\mbox{}

\begin{enumerate}[label=(\roman*),leftmargin=2em]
    \item Smoothness \& invertibility. The generators $(\gV,\gT)$ in Section~\ref{sec:mova} are smooth and admit smooth inverses, so no semantic information is lost in $(\mathbf{X}_t,\mathbf{T})$.
    \item Full joint support. Every pair $(\zz,\mm)$ that satisfies the dual-mask constraint in Section~\ref{sec:mova} occurs with positive density, ensuring that concepts are observed under all admissible temporal spans.
    \item Temporal stability. Each video admits segments $\{\mathcal{S}_{k}\}$ such that the true text-side mask is constant on $\mathcal{S}_{k}$ while visual masks may vary as long as the dual-mask constraint holds.
    \item View diversity. Conditioned on $(\zz,\mm)$, the nuisance variables $(\veV_t,\veT)$ vary in a neighborhood, yielding multiple conditionally independent realizations of every semantic block.
\end{enumerate}
\end{condition}

\begin{theorem}[Identification Theorem]
\label{thm:identification_mova}{ }
Assume the data-generating process in Section~\ref{sec:formulation} and let $(\hgV,\hgT,\{\hat{\mm}^{\mathrm{V}}_t,\hat{\mm}^{\mathrm{T}}_t\})$ be an optimum of \eqref{eq:contrastive_objective_mova}. Under Condition~\ref{cond:identification_mova}, the true representation block $[\mathbf{z}]_{\tilde{\mathcal{B}}}$ is block-wise identifiable for any index set $\tilde{\mathcal{B}}$ that can be written as either $\cup_{\mm\in\mathcal{V}}\mathcal{B}(\mm)$ or $\cap_{\mm\in\mathcal{V}}\mathcal{B}(\mm)$ over any subset of masks $\mathcal{V}\subset\mathcal{M}$.
\end{theorem}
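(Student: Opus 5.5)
The argument follows the SmartCLIP-style template for block identifiability under masked alignment, adapted to frame-indexed dual masks. It has four steps: (1) show the true model is feasible at minimal sparsity, (2) pin down the estimated supports, (3) prove per-mask block identifiability, and (4) pass to unions and intersections of masks.

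\textbf{Step 1: feasibility and matching sparsity.} The ground-truth encoders $\hgV=(\gV)^{-1}$ and $\hgT=(\gT)^{-1}$, restricted to the semantic coordinates, together with the true masks $(\mm^{\mathrm{T}}_t,\mm^{\mathrm{V}}_t)$, satisfy the constraint in \eqref{eq:contrastive_objective_mova} with zero residual; this uses \eqref{eq:vt_constraint} and Condition~\ref{cond:identification_mova}(i). Any optimum therefore has total $\ell_0$ cost no larger than the true masks. Taking $\eta\to 0$, the optimum satisfies the exact identity $\hzV_t\odot\hat{\mm}^{\mathrm{V}}_t=\hzT\odot\hat{\mm}^{\mathrm{T}}_t$ almost surely.

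\textbf{Step 2: supports.} On each temporally stable segment $\mathcal{S}_k$ of Condition~\ref{cond:identification_mova}(iii), the text mask is fixed, so $\hzT\odot\hat{\mm}^{\mathrm{T}}_t$ is a function of $\vT$ alone for every $t\in\mathcal{S}_k$. Condition~\ref{cond:identification_mova}(iv) lets $\veT$ vary while the frames and $\zz$ are held fixed, and symmetrically lets $\veV_t$ vary with the text fixed. Hence the matched quantity depends only on the shared semantic information, namely $[\zT]_{\cB(\mm^{\mathrm{T}}_t)}=[\zV_t]_{\cB(\mm^{\mathrm{V}}_t)}$, and not on any nuisance variable. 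A counting argument then shows that the estimated supports have the same cardinality as the true ones. If an estimated support were smaller, the matched coordinates could not carry $|\cB(\mm)|$ dimensions of information: by full support (ii) together with invertibility, the matched vector must determine the shared block. If it were larger, it would contradict minimality. Consequently there is a bijection between true and estimated active blocks on every segment.

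\textbf{Step 3: block identifiability for a single mask.} Write $h=\hgT\circ\gT$, so that $\hzT=h(\zT,\veT)$. From Step 2, $[\hzT]_{\cB(\hat{\mm})}$ depends only on $[\zT]_{\cB(\mm)}$. The argument follows von K\"ugelgen et al.: differentiate with respect to the coordinates outside the block and with respect to $\veT$, and show the derivatives vanish on an open set. Full support and smoothness then extend this vanishing everywhere. Invertibility of the restricted map comes from dimension matching combined with injectivity of the full encoder.

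\textbf{Step 4: unions and intersections.} Suppose each $[\zz]_{\cB(\mm)}$ with $\mm\in\cV$ is identified by some map $h_{\mm}$. For the union, concatenate the estimated blocks; invertibility follows because every coordinate of the union is recovered by at least one block. For the intersection, take two masks $\mm_1,\mm_2$. The estimate $\hat{\mathbf{c}}$ for the intersection is the part common to both estimated blocks. One direction is immediate: $\hat{\mathbf{c}}$ is a function of $[\zz]_{\cB(\mm_1)}$ and also a function of $[\zz]_{\cB(\mm_2)}$. Full support of $\zz$ then says that any function of both arguments varying independently (the Lemma from Kong et al. or Yao et al.) is a function of $[\zz]_{\cB(\mm_1)\cap\cB(\mm_2)}$ only. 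Finite induction over $\cV$ completes the argument.

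\textbf{Main obstacle.} The hard part is Step 2: showing that the estimated masks cannot align spurious nuisance or out-of-mask coordinates while still satisfying minimal sparsity, especially since the visual masks may vary within a segment. My first attempt will pool frames across $\mathcal{S}_k$. Because $\hat{\mm}^{\mathrm{T}}$ is constant there, any spurious alignment would force $\hzT$ to agree with several different frame blocks simultaneously. Combining this with view diversity, which supplies independent $\veV_t$, should force those coordinates to be constant, and minimality then removes them.
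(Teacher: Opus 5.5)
Your Step~2 has a genuine gap, and it sits in the direction you treat as easy. You rule out an estimated support smaller than the true one by asserting that ``the matched vector must determine the shared block,'' but nothing in \eqref{eq:contrastive_objective_mova} enforces this. The constraint only asks the two masked vectors to agree, and it survives zeroing out any common set of coordinates on both sides. In particular, $\hat{\mm}^{\mathrm V}_t=\hat{\mm}^{\mathrm T}_t=\mathbf{0}$ is feasible with zero residual and zero $\ell_0$ cost. Your Step~1 (feasibility of the true model) therefore gives only an upper bound on the optimal sparsity, and the $\ell_0$ term then pushes the optimum toward collapse. Full support and invertibility of $(\gV,\gT)$ are properties of the data; they say nothing about how much information the learned masked vector must retain.

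The paper closes exactly this step in the no-missing-coordinates part of Lemma~\ref{lem:vt_step2_mask}. It goes beyond the pure constrained problem and invokes the contrastive positives and negatives of the actual MoVA loss: dropping a true coordinate lowers positive similarity and makes hard negatives harder to separate, which outweighs the additive sparsity saving. You need some such non-degeneracy ingredient (negatives, or an entropy/uniformity term) before your counting argument, and hence your claimed bijection of active blocks, can go through. Your ``main obstacle'' paragraph is aimed at the opposite direction. Spurious extra coordinates are the easy case, removed by minimality both in your sketch and in the paper.

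The same missing ingredient reappears in Step~3. In von K\"ugelgen et al.\ invertibility of the content map comes from entropy maximization, not from alignment. ``Injectivity of the full encoder'' is not available here either, since $\hgT\circ\gT$ maps $(\zT,\veT)$ into $\R^{d(\zz)}$ and cannot be injective once $\veT$ is nontrivial. The paper does not derive this either: it simply posits, under Condition~\ref{cond:identification_mova}(i), that the composite encoder restricted to the active coordinates has full block rank.

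Finally, Step~2 treats the \emph{learned} text mask as constant on $\mathcal{S}_k$, whereas Condition~\ref{cond:identification_mova}(iii) says this only of the \emph{true} mask. The paper proves class-wise consistency of $\hat{\mm}^{\mathrm T}_t$ as a separate step, using view diversity, before support-correctness. Your pooling argument presupposes this consistency rather than establishing it.
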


\paragraph{Concept preservation.}
Theorem~\ref{thm:identification_mova} ensures that the concept block associated with a temporal span $\mm$ is retained in the learned representation. Hence, even when a caption highlights only a short portion of a long video, MoVA preserves the frame-level semantics selected by $\mm$, preventing unrelated frames from overwriting them.

\paragraph{Concept disentanglement.}
The intersection operation in Theorem~\ref{thm:identification_mova} allows us to recover atomic concepts that recur across temporally disjoint captions. For example, repeated mentions of the same actor or action can be isolated by intersecting the corresponding masks, which explains the compositional behavior observed in Section~\ref{sec:experiments}.

\paragraph{Theoretical contribution.}
Theorem~\ref{thm:identification_mova} extends the multi-view identification frameworks of~\cite{von2021self,yao2023multi} to temporally indexed video--text data without requiring explicit segment labels. Whereas prior work presumes the view group of each caption is known, our objective infers the grouping automatically through the dual-mask constraint, providing the first temporal identification guarantee for CLIP-style video--text models. For the proof of the identification theory, please refer to the supplementary material.

\section{MoVA: Modular Video-Text Alignment}
\label{sec:mova}

\begin{figure*}[t]
  \centering
   \includegraphics[width=1\linewidth]{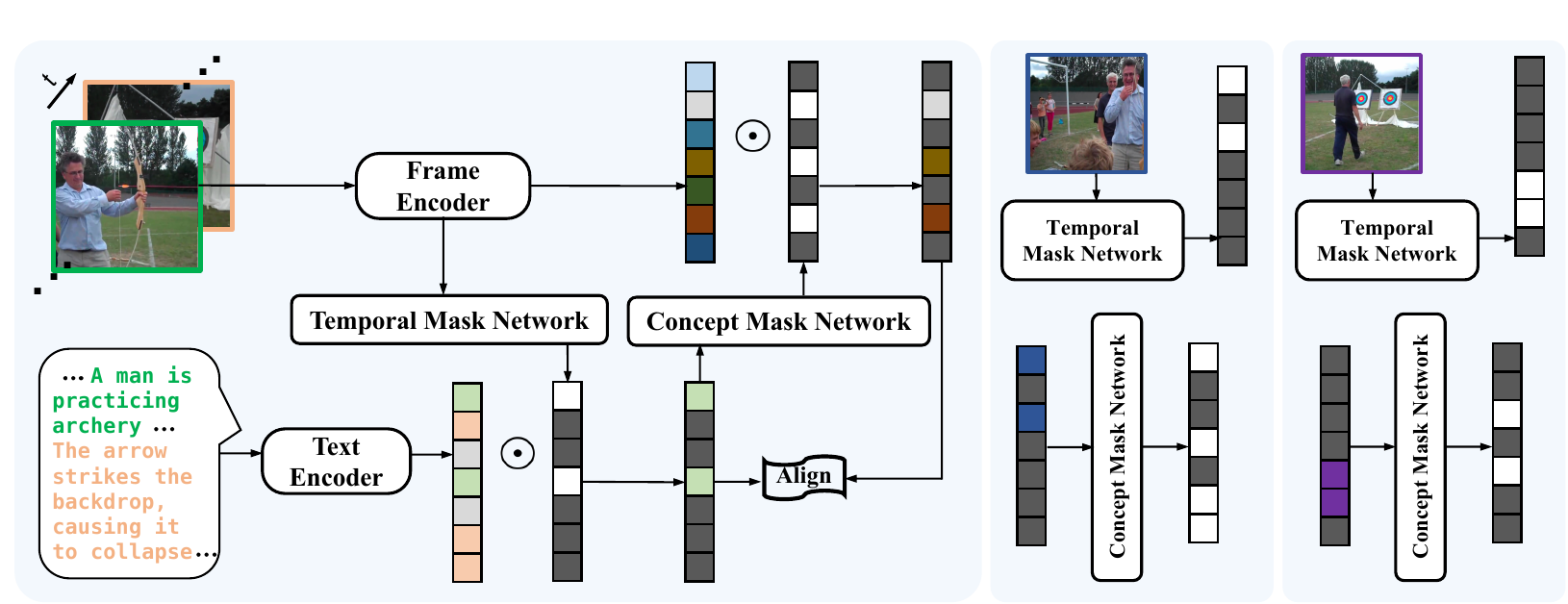}

   \caption{MoVA overview. We integrate dual asymmetric projections: the Temporal Mask Network selects which subset of the global text representation to be used, while the Concept Mask Network selects the parts of the frame representation that align with the selected, correlated text.}
   \label{fig:overview}
\end{figure*}

In this section, we present our empirical approach to modular video-text alignment based on the identification theory in Section~\ref{sec:theory}, detailing the asymmetric dual projections, model architecture and training objectives.

\paragraph{Text-to-Video Projection Learning.}
From the global text representation to each local frame, the text-to-video projection aims to account for frames’ differing focal points on the text, pairing every frame with the portion of the text most relevant to it.
$\mm^{\mathrm{T}}(\cdot)$ denotes the masking projection of $\vX_t$. The Temporal Mask Network (TMN) consists of a two-layer Transformer block. It takes the frame and its learnable positional embedding as the query, allows every token to interact with that frame and with all other tokens in the sequence, and outputs a binary vector $\mm^{\mathrm{T}}(\hzV_t)$ via a straight-through estimator~\cite{bengio2013estimating}, i.e., the subset of text most associated with that frame.
Starting from each frame, we (i) pull the frame’s visual embedding $\hzV_{t}$ closer to its corresponding masked text embedding $\hzT_{t} \odot \hm_t^{\mathrm{T}}$ and (ii) enforce that for the current frame $t$, its similarity to its own masked text exceeds its similarity to the masked texts of other frames (e.g., $t-1$). Frames farther away in the current video (beyond the temporal window) and frames from other video samples are regarded as harder examples. We define the contrastive frame-level loss as follows:
\begin{equation}
\label{eq:frame_loss}
\begin{aligned}
    \ell_{\text{ctrf}}(t&) = \left[ 1 - \langle \hzV_{t}, \hzT_{t} \odot \hm_t^{\mathrm{T}}\rangle \right] + \\
    &\left[ \max(0, \Delta - \langle \hzV_{t}, \hzT_{t}\odot \hm_t^{\mathrm{T}}  \rangle + \langle \hzV_{t}, \hzT_{k}\odot \hm_t^{\mathrm{T}} \rangle) \right],
\end{aligned}
\end{equation}
where $\langle\cdot,\cdot\rangle$ denotes the cosine similarity; $\Delta$ is the contrastive margin; and $\hzT_{k}$ denotes the $k$-th negative sample. For brevity, we omit the explicit sample index $i$ in (\ref{eq:frame_loss}). The aggregate loss is given by:
\begin{equation}
    \Ltv=\frac{1}{L}\sum_{i=1}^N\sum_{t=1}^{L_i}\ell_{\mathrm{ctrf}}(t),
\end{equation}
 where $L_i$ denotes the valid frames per sample, $L=\sum_{i=1}^N\sum_{t=1}^{L_i}$ This is consistent with our goal of addressing temporal misalignment and enforces alignment between the global text and the local frames.

\paragraph{Video-to-Text Projection Learning.}
Each frame in the video preserves, in its entirety, the cross-modal semantic information contained in the temporally combined text subset obtained from text-to-video projection learning. We shift the control from text to frames: each frame selects the portion most related to its own visual information and aligns it with the correlated text subset. Following \cite{Xie2025SmartCLIPMV}, we introduce temporal modeling and extend it to video–text alignment.
We build the Concept Mask Network (CMN), \(\hm_t^{\mathrm{V}}(\cdot)\), which learns a masking projection of text subset \(\hzT_{t}\odot \hm_t^{\mathrm{T}}\), which we denote as $\hat{\mathbf{s}}^{\mathrm{T}}$ for brevity. It consists of a single-layer Transformer block and an attention-pooling layer that adaptively down-samples the output to match the dimensionality of the CLIP representation.

To disentangle frame-specific factors and ensure identifiability, we learn the projection with two modular contrastive terms. 
Same-Frame Different-Mask (sfdm) fixes a frame representation $\hzV_t$ (encoder output, normalized) and applies caption-conditioned visual-dimension masks produced from different captions; letting $\hm_{j,t}$ denote the selector induced by caption $j$ fot $t$-th frame and using temperature $\tau$,
\begin{equation}
\label{eq:sfdm}
\mathcal{L}_{\mathrm{sfdm}}
= -\frac{1}{L}\sum_{i=1}^{N}\sum_{t=1}^{L_i}
\log
\frac{\exp\!\Big(\langle \hm_{i,t}^{\mathrm{V}}\!\odot\!\hzV_{i,t},\, \hat{\mathbf{s}}^{\mathrm{T}}_i\rangle/\tau\Big)}
{\sum_{j=1}^{N^\prime}\exp\!\Big(\langle \hm_{j,t}^{\mathrm{V}}\!\odot\!\hzV_{i,t},\, \hat{\mathbf{s}}^{\mathrm{T}}_j\rangle/\tau\Big)} \, .
\end{equation}
Different-Frame Same-Mask (dfsm) contrasts the text subset $\hat{\mathbf{s}}^{\mathrm{T}}$ in the positive  pair with randomly sampled frame representations (from both intra-video and inter-video):
\begin{equation}
\label{eq:dfsm}
\mathcal{L}_{\mathrm{dfsm}}
= -\frac{1}{L}\sum_{i=1}^{N}\sum_{t=1}^{L_i}
\log
\frac{\exp\!\Big(\langle \hm_{i,t}^{\mathrm{V}}\!\odot\!\hzV_{i,t},\, \hat{\mathbf{s}}^{\mathrm{T}}_i\rangle/\tau\Big)}
{\sum_{j=1}^{N^\prime}\exp\!\Big(\langle \hm_{i,t}^{\mathrm{V}}\!\odot\!\hzV_{j,t},\, \hat{\mathbf{s}}^{\mathrm{T}}_i\rangle/\tau\Big)} \, .
\end{equation}
Thus, we have the alignment objective from frames to text:
\begin{equation}
    \Lvt=\mathcal{L}_{\mathrm{sfdm}}+\mathcal{L}_{\mathrm{dfsm}}
\end{equation}

\paragraph{\ours training objective.}
We utilize a global symmetric retrieval loss $\Lr$
following~\cite{luo2022clip4clip} as grounding, where the similarity matrix is given by cosine similarities between global video and text representations. 
We impose sparsity constraints for both $\hm^{\mathrm{T}}$ and $\hm^{\mathrm{V}}$
, encouraging textual and visual concepts to be encoded in a minimal set of latent dimensions, thereby facilitating the disentanglement of distinct concepts. 
The training objective of MoVA is a weighted sum of aforementioned loss terms:
\begin{equation}
    \mathcal{L}=\lamg\Lr+\lamtv\Ltv+\lamvt\Lvt+\lams\Lsp
\end{equation}

\section{Experiments}
\label{sec:experiments}
\begin{table*}[t]
\centering
\footnotesize
{
{
\caption{\textbf{Retrieval performance on ActivityNet.} ``$\uparrow$'' denotes that higher is better. ``$\downarrow$'' denotes that lower is better. \textbf{Bold} and \underline{underlined} values denote the best and second-best results, respectively. $^\dagger$ denotes that the method uses DSL~\cite{Cheng2021ImprovingVR} as post-processing operations.}
\label{activitynet_results}
\resizebox{\textwidth}{!}{%
\begin{tabular}{p{80pt}|ccccc|ccccc}
\toprule[1.5pt]
\multirow{2}{*}{\textbf{Method}} &  \multicolumn{5}{c|}{\textbf{Text $\rightarrow$ Video}} & \multicolumn{5}{c}{\textbf{Video $\rightarrow$ Text}}  \\ 
\cmidrule(rl){2-6}\cmidrule(rl){7-11}
  & {R@1$\uparrow$} & R@5$\uparrow$ & R@10$\uparrow$ & MdR$\downarrow$ & MnR$\downarrow$
& {R@1$\uparrow$} & R@5$\uparrow$ & R@10$\uparrow$ & MdR$\downarrow$ & MnR$\downarrow$    \\ \midrule
CLIP4Clip~\cite{luo2022clip4clip} & 43.8 & 74.9 & 86.6 & 2.0 & \Scnd{6.4} & 42.9 & 75.3 & 86.6 & 2.0 & 6.4 \\
DRL~\cite{wang2022disentangled} & 44.2 & 73.9 & 84.1 & 2.0 & 7.9 & 42.7 & 73.8 & 84.5 & 2.0 & 7.7 \\
X-CLIP~\cite{ma2022x} & 42.9 & 73.7 & 84.7 & 2.0 & 7.4 & 42.2 & 74.6 & 85.5 & 2.0 & 6.9 \\
ProST~\cite{Li2023ProgressiveSP} & 44.5 & 72.5 & 83.6 & 2.0 & 8.5 & 43.7 & 73.9 & 84.7 & 2.0 & 7.0 \\
InternVideo~\cite{wang2023internvid} & 42.2 & 73.1 & 84.7 & 2.0 & 7.7 & 41.6 & 73.8 & 85.0 & 2.0 & 7.1 \\
DiCoSA~\cite{Jin2023TextVideoRW} & 43.7 & 73.8 & 84.4 & 2.0 & 8.0 & 40.6 & 71.8 & 83.8 & 2.0 & 7.7 \\
DGL~\cite{yang2024dgl} & 43.5 & 74.2 & 84.8 & 2.0 & 8.2 & 43.4 & 73.6 & 85.1 & 2.0 & 7.9 \\
VideoCLIP-XL~\cite{wang2024videoclip} & 46.9 & 75.1 & 86.3 & 2.0 & 6.6 & 37.7 & 68.2 & 81.1 & 2.0 & 10.1 \\ 
\midrule
 \rowcolor{aliceblue!60} \textbf{MoVA ~~(Ours)}  & \Scnd{47.8} & \Scnd{77.4} & \Scnd{87.0} & 2.0 & 7.0 & \Scnd{46.7} & \Scnd{76.6} & \Scnd{86.8} & 2.0 & \Scnd{6.3}      \\
 \rowcolor{aliceblue!60} \textbf{MoVA$^\dagger$ (Ours)}  & \Frst{53.6} & \Frst{79.1} & \Frst{88.1} & \Frst{1.0} & \Frst{6.3} & \Frst{54.4} & \Frst{79.9} & \Frst{88.6} & \Frst{1.0} & \Frst{5.7}      \\ \bottomrule[1.5pt]
\end{tabular}
}
}}

\end{table*}

\subsection{Setup}
\paragraph{Datasets.}
MoVA is evaluated on both long and short video--text benchmarks. Classic video-text retrieval datasets ActivityNet~\cite{krishna2017dense}, MSVD~\cite{chen2011collecting} (YouTube2Text), and DiDeMo~\cite{anne2017localizing} are utilized for evaluation, following the standard train/val/test splits~\cite{luo2022clip4clip}. To test scalability to longer descriptions, we adopt the recently released VideoUFO~\cite{wang2025videoufo} and UltraVideo~\cite{xue2025ultravideo}.

\paragraph{Implementation details.}
To facilitate the transfer of sparse mapping information from image-text alignment to video-text alignment, we first train on the image-caption dataset ShareGPT4v~\cite{Chen2023ShareGPT4VIL} following~\cite{Xie2025SmartCLIPMV}, and use the resulting weights to initialize our text encoder, frame encoder, and Concept Mask Network. We adopt a positional encoding capable of handling 248 tokens, overcoming the 77-token limit in the original CLIP.
For mask modeling, we apply sigmoid to restrict the output to the range (0, 1) and employ straight through estimation (STE)~\cite{bengio2013estimating} to binarize the outputs.
The initial learning rate for text encoder and frame encoder is $10^{-7}$, and the initial learning rate for other modules is $10^{-4}$.
Unless stated otherwise, all methods use ViT-B/16 initialization during training with batch size $256$. For VideoCLIP-XL~\cite{wang2024videoclip}, we evaluate the publicly released checkpoint, which was trained with ViT-L/14 initialization.
We set the max token length, max frame length, and number of training epochs to 64, 64, and 20 for ActivityNet, DiDeMo, VideoUFO, and UltraVideo, and to 32, 12, and 3 for MSVD; for VideoCLIP-XL and our method, the maximum token length is 248.
DSL~\cite{Cheng2021ImprovingVR} post-processing is only applied when explicitly noted.

\paragraph{Metrics.}
We use retrieval metrics that capture both precision and ranking stability. Recall at K (R@1/5/10) quantifies whether the correct item appears within the top-K retrieved results, while Median Rank (MdR) and Mean Rank (MnR) diagnose the heavy-tail behavior introduced by long ambiguous descriptions. All metrics are reported for both text$\rightarrow$video and video$\rightarrow$text scenarios following prior work~\cite{luo2022clip4clip,ma2022x}. 

\subsection{Results}

\paragraph{Video-text retrieval.}
In this work, we focus on methods that adapt image-pretrained CLIP models to video without large-scale video post-pretraining corpora. Accordingly, our controlled comparisons emphasize approaches whose initialization and supervision mainly come from image-based CLIP pretraining. Methods that leverage extra post-pretraining video data, e.g., CLIP-ViP trained with HD-VILA-100M~\cite{xue2022clip, xue2022hdvila} and VidLA trained with YT-VidLA-800M~\cite{rizve2024vidla}, are outside the main controlled setting. We include VideoCLIP-XL~\cite{wang2024videoclip} as a reference for long-caption video-text retrieval.
We compare our model with several state-of-the-art works on the video-text retrieval task. 
Our model achieves the best results on all datasets even without post-processing like DSL, as shown in Tables~\ref{activitynet_results} and~\ref{tab:retrieval_results_t2v}. 
We find that for nearly all methods, R@1 in text-to-video retrieval is higher than in video-to-text, which is the opposite of the pattern in image–text retrieval (where image-to-text R@1 typically exceeds text-to-image). This highlights the greater complexity of video–text alignment relative to the near one-to-one mapping in image–text alignment, underscores the guiding role of the global text representation in video–text alignment, and shows the benefit of leveraging it to help disentangle the representations.
The gains are consistent across all caption–video length regimes—from short-caption/short-video (e.g., MSVD) to long-caption/long-video (e.g., VideoUFO)—demonstrating strong bidirectional vision–language correspondence and favorable scaling behavior.

\begin{figure}[t]
  \centering
   \includegraphics[width=1.0\linewidth]{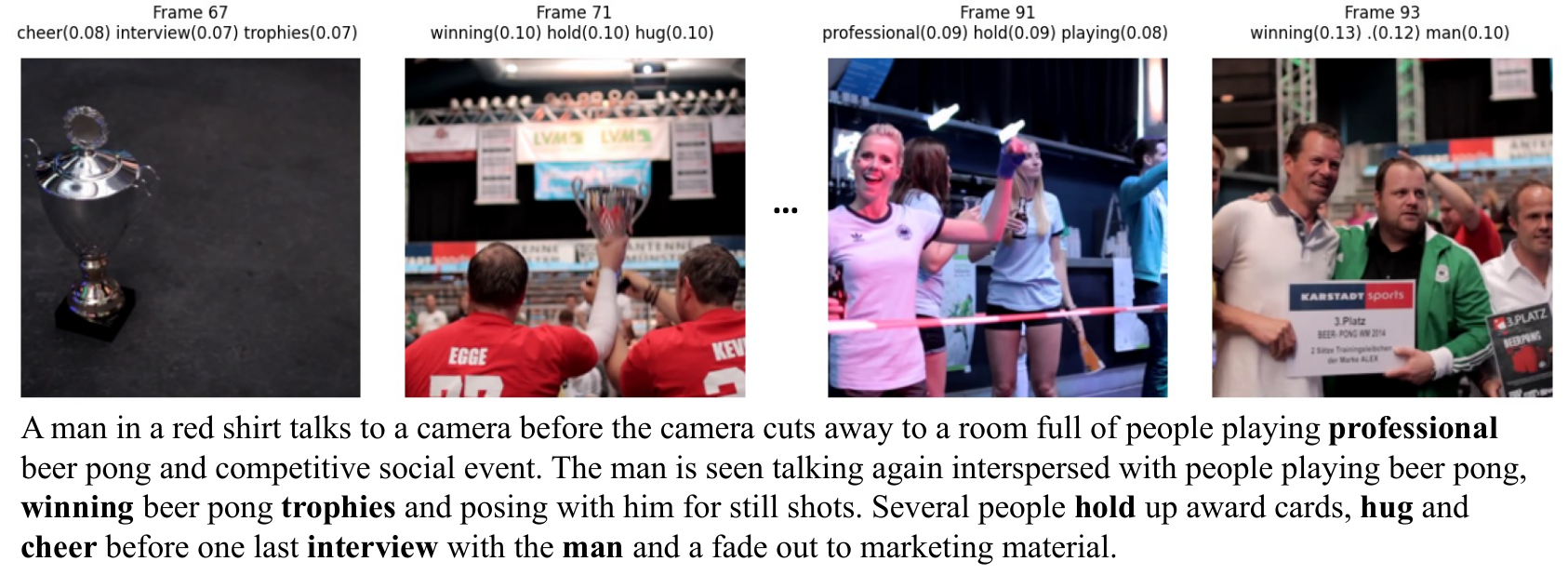}
   \vspace{-2em}
   \caption{A test example on ActivityNet illustrating the most relevant text subsets and weights assigned by the Temporal Mask Network to each frame (top-3 only).  }
   \label{fig:top3words}
\end{figure}

\vspace{2.5em}
\begin{figure*}[t]
\hrule
    \centering
    \begin{minipage}{0.55\textwidth}
    \tiny
    \emph{A \textcolor{darkgreen}{\textbf{\emph{black Labrador in a snug dark leather jacket}}} sits in a small,
    round, worn \textcolor{darkgreen}{\textbf{\emph{wooden coracle}}} on \textcolor{seafoam}{\textbf{\emph{green-brown water}}}. 
    Two \textcolor{darkgreen}{\textbf{\emph{matte metal mugs}}} lie on the coracle’s boards, and the dog \textcolor{orange}{\textbf{\emph{picks up}}} one to sip.
    The Labrador turns its head to one side, scanning the surroundings with a \textcolor{darkgreen}{\textbf{\emph{worried gaze}}}. Its wet fur shines;
    the jacket shows a soft leather sheen; the wood grain is rough and damp.
    \textcolor{seafoam}{\textbf{\emph{Ripples spread out in all directions}}} from the coracle and touch the faint
    reflections. The boat and the dog both mirror on the surface. Not far ahead,
    \textcolor{darkgreen}{\textbf{\emph{low green banks come into view}}}. The shore’s plants are barely visible, with dim
    reflections on the water. \textcolor{seafoam}{\textbf{\emph{Overcast sky gives soft, even light}}}; colors remain
    natural and muted. The scene stays calm yet slightly tense, with only the dog
    and boat—no people present.} 
    \end{minipage}%
    \hfill
    \begin{minipage}{0.43\textwidth}
    \includegraphics[width=0.95\linewidth]{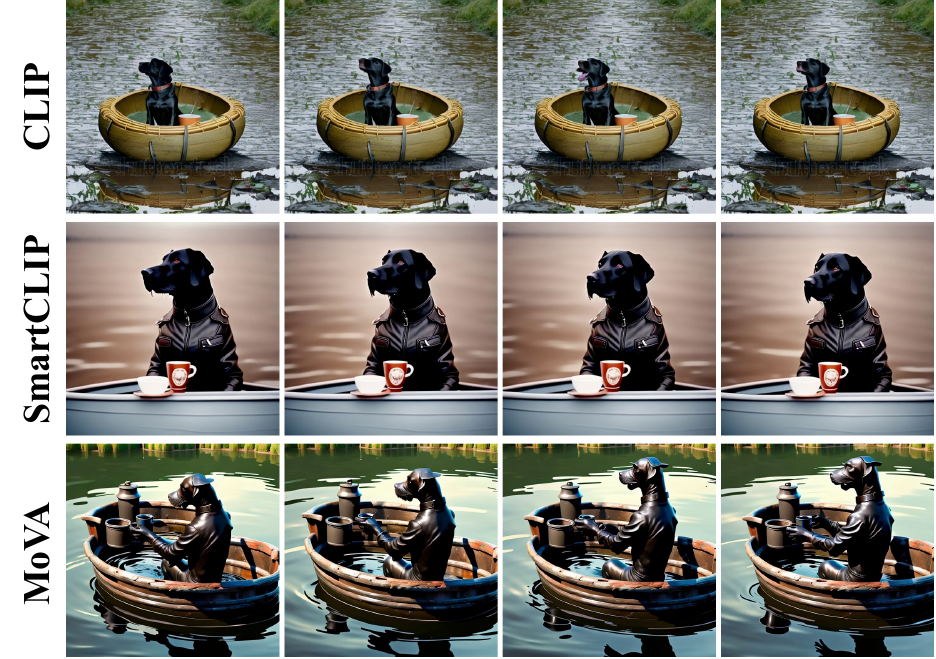}
    \end{minipage}
    \hrule
    \vspace{0.8em}
    \caption{
        \textbf{Example of Long-text-to-video generation.} 
        We replace the CLIP text encoder in VideoCrafter2~\cite{chen2024videocrafter2}. 
        Compared with existing text encoders trained through image–text alignment when used for text-to-video generation, our method MoVA comprehensively captures modular temporal-level and object-level concepts.
        MoVA can generate details such as the \emph{matte metal mug} and the \emph{pick up} motion. 
    }
    \label{fig:example_t2v}
\end{figure*}

\vspace{-0.8em}
\paragraph{Long-text-to-video generation.}
Figure~\ref{fig:example_t2v} highlights MoVA's ability to keep long textual descriptions intact when serving as the language interface for text-to-video generators such as VideoCrafter2~\cite{chen2024videocrafter2}. 
Compared with CLIP and SmartCLIP—which focus exclusively on image–text alignment—our method generates videos that are more vivid and more faithfully aligned with the text.
TMN dynamically selects span-level concepts and CMN aligns them with temporally evolving frames, which allows the generator to render subtle objects (e.g., the \emph{matte metal mug}) and actions (the dog \emph{picking up} the mug) that would otherwise vanish. For quantitative evaluation, we use VBench~\cite{huang2024vbench} to benchmark VideoCrafter2 in a zero-shot setting by replacing only its CLIP text encoder, and evaluate on 2,048 prompts randomly sampled from VidProM~\cite{wang2024vidprom}. The results are reported in Table~\ref{tab:comparison_results}.

\begin{table*}[tbp]
\centering
\footnotesize
{
\caption{\textbf{Text-to-video retrieval performance on various datasets.} ``$\uparrow$'' denotes that higher is better. ``$\downarrow$'' denotes that lower is better.}
\label{tab:retrieval_results_t2v} 
\resizebox{0.95\textwidth}{!}{%
\begin{tabular}{p{100pt}|ccccc|ccccc}
\toprule[1.5pt]
\multirow{2}{*}{\textbf{Method}} & \multicolumn{5}{c|}{\textbf{MSVD}} & \multicolumn{5}{c}{\textbf{DiDeMo}} \\ 
\cmidrule(rl){2-6}\cmidrule(rl){7-11}
  & R@1$\uparrow$ & R@5$\uparrow$ & R@10$\uparrow$ & MdR$\downarrow$ & MnR$\downarrow$
& R@1$\uparrow$ & R@5$\uparrow$ & R@10$\uparrow$ & MdR$\downarrow$ & MnR$\downarrow$    \\ \midrule
CLIP4Clip~\cite{luo2022clip4clip} & 47.4 & 77.8 & 85.6 & 2.0 & 10.3 & 44.8 & 73.4 & 81.6 & 2.0 & 13.5 \\
DRL~\cite{wang2022disentangled} & 49.8 & 81.2 & 89.5 & 2.0 & 9.5 & 49.0 & 76.5 & 84.5 & 2.0 & 12.0 \\
X-CLIP~\cite{ma2022x} & 50.4 & 80.6 & 89.8 & \Frst{1.0} & 8.4 & 47.8 & 79.4 & 82.3 & 2.0 & 12.5 \\
ProST~\cite{Li2023ProgressiveSP} & 46.4 & 74.4 & 83.8 & 2.0 & 12.1 & 47.5 & 75.2 & 84.6 & 2.0 & 12.3 \\
InternVideo~\cite{wang2023internvid} & 44.2 & 74.5 & 84.1 & 2.0 & 10.9 & 50.8 & 78.8 & 86.6 & \Frst{1.0} & 6.9 \\
VideoCLIP-XL~\cite{wang2024videoclip} & 48.6 & 81.0 & 86.6 & \Frst{1.0} & 10.2 & 38.6 & 63.6 & 73.0 & 3.0 & 49.2 \\
\midrule
\rowcolor{aliceblue!60} \textbf{MoVA (Ours)} & \Frst{52.6} & \Frst{83.0} & \Frst{90.6} & \Frst{1.0} & \Frst{7.8} & \Frst{57.5} & \Frst{83.2} & \Frst{91.6} & \Frst{1.0} & \Frst{4.8} \\  
\midrule[1.25pt]
\multirow{2}{*}{\textbf{Method}} & \multicolumn{5}{c|}{\textbf{VideoUFO}} & \multicolumn{5}{c}{\textbf{UltraVideo}} \\ 
\cmidrule(rl){2-6}\cmidrule(rl){7-11}
  & R@1$\uparrow$ & R@5$\uparrow$ & R@10$\uparrow$ & MdR$\downarrow$ & MnR$\downarrow$
& R@1$\uparrow$ & R@5$\uparrow$ & R@10$\uparrow$ & MdR$\downarrow$ & MnR$\downarrow$    \\ \midrule
CLIP4Clip~\cite{luo2022clip4clip} & 34.5 & 62.1 & 72.4 & 3.0 & 45.1 & 43.3 & 74.6 & 84.9 & 2.0 & 9.2 \\
DRL~\cite{wang2022disentangled} & 24.2 & 45.7 & 54.9 & 7.0 & 227.5 & 35.9 & 65.9 & 76.5 & 3.0 & 16.6 \\
X-CLIP~\cite{ma2022x} & 37.7 & 65.6 & 75.6 & 2.0 & 35.4 & 41.9 & 73.5 & 84.3 & 2.0 & 7.7 \\
ProST~\cite{Li2023ProgressiveSP} & 48.2 & 74.5 & 82.7 & 2.0 & 28.0 & 51.8 & 83.5 & 90.3 & \Frst{1.0} & 5.7 \\
InternVideo~\cite{wang2023internvid} & 42.8 & 70.2 & 76.4 & 2.0 & 33.6 & 35.4 & 65.9 & 76.3 & 3.0 & 14.0 \\
VideoCLIP-XL~\cite{wang2024videoclip} & 57.4 & 82.2 & 88.4 & \Frst{1.0} & 23.6 & 42.6 & 71.8 & 82.2 & 2.0 & 8.4 \\
\midrule
\rowcolor{aliceblue!60} \textbf{MoVA (Ours)} & \Frst{62.4} & \Frst{87.0} & \Frst{92.3} & \Frst{1.0} & \Frst{6.9} & \Frst{58.5} & \Frst{87.8} & \Frst{94.4} & \Frst{1.0} & \Frst{3.4} \\ 
\bottomrule[1.5pt]
\end{tabular}
}}

\end{table*}


\begin{table}[t]
  \centering
 \caption{Quantitative evaluation results on VBench.} 
\label{tab:comparison_results}
  
  \small
  \setlength{\tabcolsep}{8pt}
  \renewcommand{\arraystretch}{1.2}
  
  \resizebox{0.95\textwidth}{!}{%
    \begin{tabular}{l|ccccc|c}
    \toprule
    \textbf{Method} & 
    \begin{tabular}[c]{@{}c@{}}Subject\\ consistency\end{tabular} & 
    \begin{tabular}[c]{@{}c@{}}Background\\ consistency\end{tabular} & 
    \begin{tabular}[c]{@{}c@{}}Motion\\ smoothness\end{tabular} & 
    \begin{tabular}[c]{@{}c@{}}Aesthetic\\ quality\end{tabular} & 
    \begin{tabular}[c]{@{}c@{}}Imaging\\ quality\end{tabular} & 
    Avg. \\
    \midrule
    CLIP & 0.9700 & 0.9690 & 0.9799 & 0.6271 & 0.6822 & 0.8456 \\
    SmartCLIP & 0.9671 & 0.9722 & \textbf{0.9856} & 0.7064 & 0.6980 & 0.8659 \\
    MoVA (Ours) & \textbf{0.9776} & \textbf{0.9766} & 0.9849 & \textbf{0.7126} & \textbf{0.7023} & \textbf{0.8708} \\
\bottomrule[1.5pt]
    \end{tabular}
  }
\end{table}

\begin{figure}[t]
  \centering
   \includegraphics[width=0.7\linewidth]{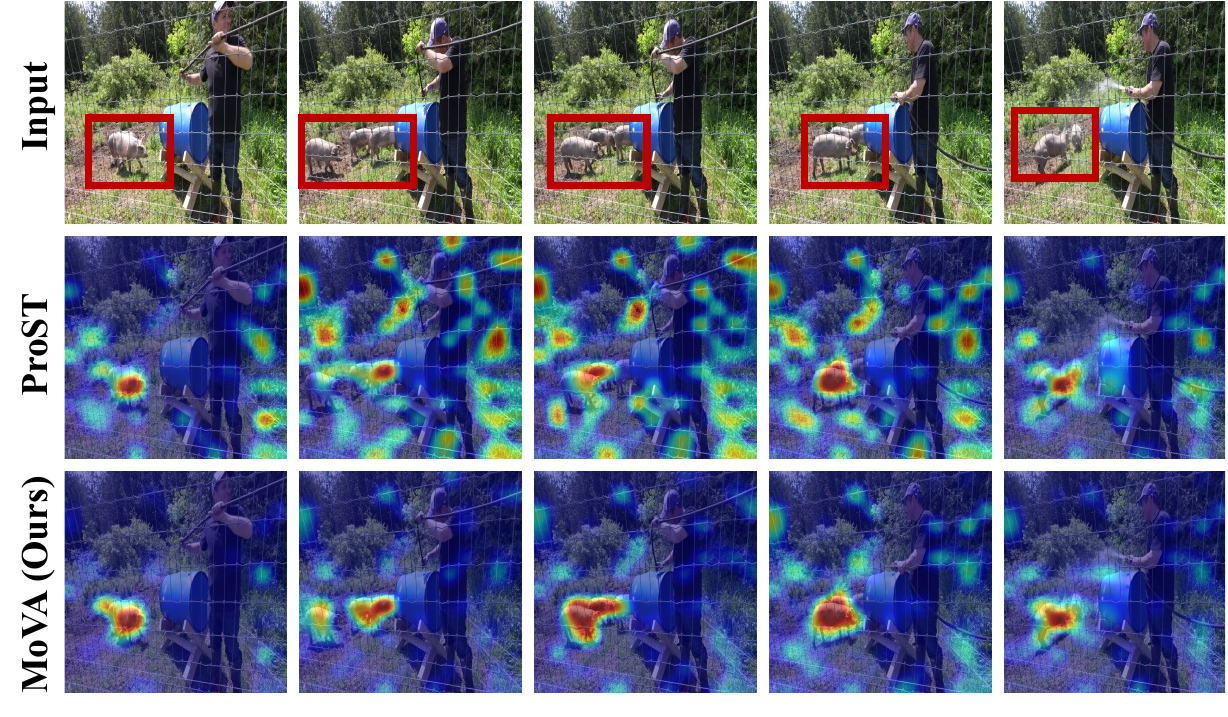}
   \caption{Visualization of learned concepts.
We perform representation visualization by formulating a proxy classification task. The cosine similarity between frame embeddings across time and a generated caption (e.g., ``some pigs'') serves as the classification score for GradCAM~\cite{Selvaraju2016GradCAMVE} attribution. Red boxes mark the temporally active regions. }
   \label{fig:concept_vis}
\end{figure}

\paragraph{Temporal mask analysis.}
Figure~\ref{fig:top3words} visualizes the top-$3$ textual spans selected by TMN across an ActivityNet video. It shows our model can select the aligned portion of the global text for various frames. 
This provides a perspective on addressing temporal misalignment, and it avoids using identical masks for adjacent frames, which reflects the role of the per-frame contrastive loss in Eq.~\eqref{eq:frame_loss}.

\paragraph{Concept-level visualization.}
Concept GradCAMs in Figure~\ref{fig:concept_vis} reveal that CMN isolates object-centric and motion-centric bases: when the caption emphasizes ``some pigs,'' activations gather around the animals despite distracting background motion. This supports our claim that semantic asymmetry must be modeled bidirectionally—text modules decide which frames to attend to, and frame-conditioned visual masks decide which text concepts remain active. Together with the retrieval gains, these qualitative trends provide high-level evidence that MoVA preserves global semantics while remaining compositional.

\subsection{Ablation Study}
\begin{figure*}[t]
  \centering
   \includegraphics[width=\linewidth]{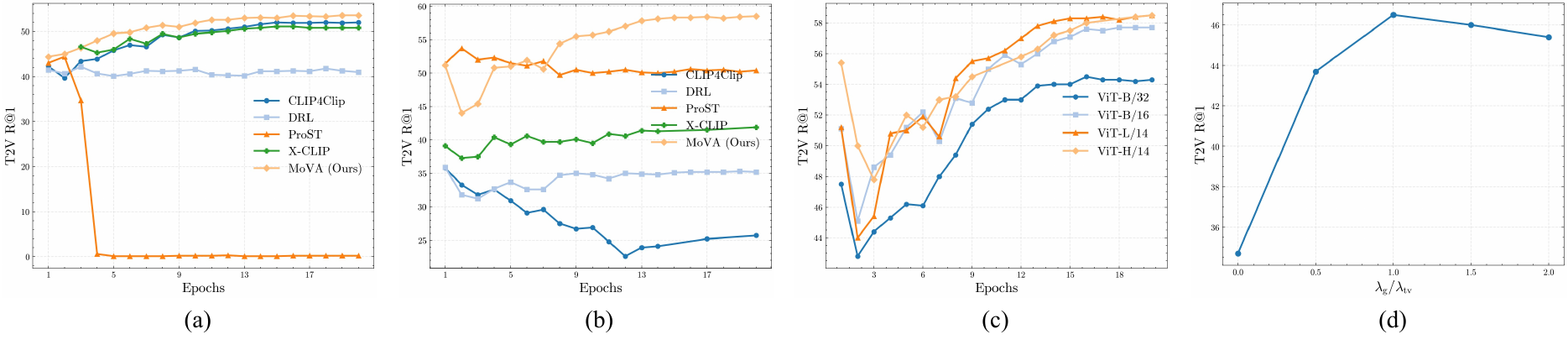}
   \caption{(a) Scaling with epochs on ActivityNet;
	(b) Scaling with epochs on UltraVideo;
	(c) Scaling trends across ViT backbones;
	(d) Ablation on loss-coefficient ratio $\lambda_g/\lambda_{tv}$.  }
   \label{fig:ablation}
\end{figure*}

\paragraph{Scalability.}
Panels (a) and (b) of Figure~\ref{fig:ablation} compare video-text retrieval when training on ActivityNet and UltraVideo. 
 MoVA exhibits steadily improving retrieval performance as training epochs increase.
Panel (c) further shows consistent improvements when swapping ViT-B/16, ViT-L/14, and ViT-H/14 backbones~\cite{radford2021learning}, indicating that TMN/CMN act as architecture-agnostic plugs rather than overfitting to a specific capacity regime. Collectively, these curves demonstrate that the dual asymmetric projections scale gracefully with both data volume and model size, which is essential for the long-form scenarios highlighted in the Introduction.

\paragraph{Loss weighting.}
Figure~\ref{fig:ablation}(d) studies the interaction between the global retrieval loss $\Lr$ and the modular alignment loss $\Ltv,\Lvt$. We fix $\lambda_{\mathrm{tv}}=\lambda_{\mathrm{vt}}=0.5$ and sweep $\lambda_{\mathrm{g}}$, plotting performance against $\lambda_{\mathrm{g}}/\lambda_{\mathrm{tv}}$. Extremely small ratios (e.g., $0.2$) under-emphasize the global constraint and slightly hurt MnR, whereas overly large ratios ($>1.2$) collapse the per-frame masks, validating the sparsity/coverage trade-off. The sweet spot around $1.0$ balances whole-video grounding with frame-level disentanglement.

\begin{wraptable}{r}{0.53\textwidth}
\vspace{-2.8em}
\centering
\caption{\textbf{Ablation studies on the ActivityNet dataset.} ``$\uparrow$'' denotes that higher is better. ``$\downarrow$'' denotes that lower is better.}
\label{tab:ablation}
\small
\setlength{\tabcolsep}{3.5pt}
\resizebox{0.50\textwidth}{!}{
\begin{tabular}{c|cccc|cccc}
\toprule[1.2pt]
\multirow{2}{*}{\begin{tabular}[c]{@{}c@{}}Ablation\\ Items\end{tabular}} &
\multicolumn{4}{c|}{Text $\rightarrow$ Video} &
\multicolumn{4}{c}{Video $\rightarrow$ Text} \\
\cmidrule(lr){2-5}\cmidrule(lr){6-9}
& R@1$\uparrow$ & R@5$\uparrow$ & R@10$\uparrow$ & MnR$\downarrow$
& R@1$\uparrow$ & R@5$\uparrow$ & R@10$\uparrow$ & MnR$\downarrow$ \\
\midrule
i   & 43.7 & 73.5 & 84.0 & 7.9  & 42.5 & 72.9 & 84.2 & 7.7 \\
ii  & 35.6 & 65.1 & 77.6 & 11.7 & 33.7 & 62.4 & 75.8 & 11.6 \\
iii & 42.0 & 70.0 & 81.5 & 9.7  & 41.9 & 70.8 & 80.8 & 9.0 \\
iv  & \textbf{47.6} & \textbf{77.4} & \textbf{87.0} & \textbf{7.0}
    & \textbf{46.7} & \textbf{76.6} & \textbf{86.8} & \textbf{6.3} \\
\bottomrule[1.2pt]
\end{tabular}}
\vspace{-2.0em}
\end{wraptable}
\paragraph{Effectiveness of dual asymmetric projections.}
Table~\ref{tab:ablation} summarizes three intermediate variants derived from the SmartCLIP initialization. (i) Fine-tuning with only the global retrieval loss $\Lr$ removes all modular objectives, corresponding to the image-text setting and yielding the weakest R@1. (ii) We retain SmartCLIP's single-direction visual$\rightarrow$text masking by learning a 3D mask from video frames to text tokens; this ignores the text-side global guidance emphasized in Section~\ref{sec:intro} and is unable to resolve temporal misalignment, leading to the sharpest degradation (R@1 $35.6$). (iii) We keep our TMN but apply temporal masks directly on raw text tokens rather than on the global text representation. (iv) The full MoVA stacks both TMN and CMN so that text can guide frame selection while frames can reweight textual concepts. These targeted ablations corroborate that learning dual asymmetric projections is effective for video-text alignment.

\section{Conclusion}
\label{sec:conclusion}
In this paper, we address temporal misalignment and semantic asymmetry in video--text alignment by formulating identification conditions that explicitly connect textual descriptions to their atomic visual counterparts. Building on these insights, MoVA introduces dual asymmetric projections that maintain global semantics while isolating frame-specific concepts, leading to disentangled and compositional cross-modal representations. Quantitative evaluations and qualitative studies jointly validate the theoretical claims, confirming that principled structure can translate into practical gains for multimodal video-text alignment. Looking ahead, MoVA will be integrated with next-generation multimodal foundation models to better handle long-horizon narratives, event compositionality, and fine-grained temporal grounding.

\section*{Acknowledgments}
We would like to acknowledge the support from NSF Award No.~2229881, AI Institute for Societal Decision Making (AI-SDM), the National Institutes of Health (NIH) under Contract R01HL159805, and grants from Quris AI, Florin Court Capital, MBZUAI-WIS Joint Program, and the Al Deira Causal Education project.

\bibliographystyle{splncs04}
\bibliography{main}

\clearpage
\section*{Supplementary Material}
\appendix
\setcounter{figure}{7}
\setcounter{table}{4}
\setcounter{equation}{9}

\section{Proof}
\label{sec:proof}
Our identification analysis shows that the selective, mask-based alignment objective in (\ref*{eq:contrastive_objective_mova}) recovers caption-conditioned semantic blocks up to blockwise invertible maps and extends to unions/intersections of such blocks. We study the constrained limit of the loss (alignment tolerance $\eta\!\to\!0$), which enforces exact masked equality at optimum
\[
\hzV_t\odot \hat{\mm}^{\mathrm V}_t \;=\; \hzT\odot \hat{\mm}^{\mathrm T}_t,\qquad t=1,\ldots,T,
\]
mirroring the dual-mask generative relation (\ref*{eq:vt_constraint}). Because similarities are computed \emph{after masking}, the loss constrains only the active coordinates; the sparsity term selects minimal supports that still achieve strong positive alignment and negative separation.

\noindent In this setting, Lemma~\ref{lem:vt_step1_projection} proves that masked alignment forces nuisance invariance on the active coordinates and yields blockwise invertible reparameterizations of the text and visual blocks, reducing identification to choosing the correct coordinates. Leveraging mask recurrence and view diversity, Lemma~\ref{lem:vt_step2_mask} shows that the learned text masks are class-wise consistent and support-correct at the optimum (no extras, no misses), a consequence of the alignment geometry plus the sparsity/contrastive trade-off. With these blocks fixed, Theorem~\ref{thm:vt_step3_blocks} identifies any union or intersection of recurring blocks up to blockwise invertible maps, and the corresponding claims hold on the visual side via the dual-mask equalities.

\begin{lemma}[Nuisance invariance on the active block and blockwise invertibility]
\label{lem:vt_step1_projection}
Let $\rtt:=\hgT\!\circ\!\gT$ and $\rii:=\hgV\!\circ\!\gV$.
Fix a frame $t$ and assume Condition~\ref*{cond:identification_mova}(i) (smoothness and local invertibility of the modality generators and encoders).
Assume further that in a small neighborhood of the data point the learned mask supports are stable, i.e., ${\rm supp}(\hat\mT_t)$ and ${\rm supp}(\hat\mV_t)$ do not change.
Define the binary diagonal projectors
\[
P_T:=\mathrm{Diag}(\hat\mT_t),\qquad P_V:=\mathrm{Diag}(\hat\mV_t).
\]
At an optimum of the selective-alignment objective (the constrained limit of (\ref*{eq:contrastive_objective_mova})), the masked alignment holds
\begin{equation}
\label{eq:vt_step1_align}
P_T\,\hzT \;=\; P_V\,\hzV_t,\quad
\hzT=\rtt(\zT,\veT),\quad \hzV_t=\rii(\zV_t,\veV_t).
\end{equation}
Then:
\begin{enumerate}[leftmargin=1.2em,itemsep=2pt,topsep=4pt]
\item (\emph{Nuisance invariance on the text active block}) For every coordinate $i$ with $[P_T]_{ii}=1$ and every text-nuisance coordinate $j$,
\(
\frac{\partial[\hzT]_i}{\partial[\veT]_j}=0
\).
Equivalently, $P_T\,\hzT$ is independent of $\veT$.
\item (\emph{Blockwise invertible reparameterizations}) There exist smooth, locally invertible maps $h_{\mathrm T}$ and $h_{\mathrm V}$ (defined on the respective active blocks) such that
\[
P_T\,\hzT \;=\; P_T\,h_{\mathrm T}(\zT),\qquad
P_V\,\hzV_t \;=\; P_V\,h_{\mathrm V}(\zV_t).
\]
\end{enumerate}
\end{lemma}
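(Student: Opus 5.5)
The plan is to exploit the fact that, in the masked alignment identity \eqref{eq:vt_step1_align}, the two sides are driven by \emph{independent} nuisance sources. The left side $P_T\,\rtt(\zT,\veT)$ depends on the text nuisance $\veT$. The right side $P_V\,\rii(\zV_t,\veV_t)$ depends only on the visual latent and the visual nuisance. By the generative constraint \eqref{eq:vt_constraint} and Condition~\ref*{cond:identification_mova}(iv), we can hold $(\zT,\zV_t,\mm)$ fixed and vary $\veT$ in a neighborhood without changing $\zV_t$ or $\veV_t$.

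\emph{Step 1 (nuisance invariance).} Fix a data point and the stable supports. Differentiate both sides of \eqref{eq:vt_step1_align} with respect to $[\veT]_j$. The right-hand side does not depend on $\veT$, so its derivative vanishes. The projector $P_T$ is constant by the support-stability assumption, so the derivative of the left side is $P_T\,\partial \rtt/\partial[\veT]_j$. Reading off the $i$-th row for every $i$ with $[P_T]_{ii}=1$ gives $\partial[\hzT]_i/\partial[\veT]_j=0$.

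Smoothness from Condition~\ref*{cond:identification_mova}(i) makes these derivatives well defined. Condition~\ref*{cond:identification_mova}(ii),(iv) ensure that the admissible values of $\veT$ form an open set given $\zT$. Together these let us integrate the vanishing derivative along paths in that set, so $P_T\hzT$ is constant in $\veT$. The symmetric argument, varying $\veV_t$ with the text fixed, gives the analogous invariance of $P_V\hzV_t$ in $\veV_t$. The lemma does not state this explicitly, but Step 2 needs it.

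\emph{Step 2 (reparameterization).} Because $P_T\hzT$ does not depend on $\veT$, we can define $h_{\mathrm T}(\zT):=\rtt(\zT,\boldsymbol\epsilon_0)$ for any fixed reference nuisance $\boldsymbol\epsilon_0$. By Step 1, $P_T\rtt(\zT,\veT)=P_T h_{\mathrm T}(\zT)$, and $h_{\mathrm T}$ is smooth. For invertibility on the active block, note that $\rtt=\hgT\circ\gT$ is a composition of smooth maps with smooth local inverses (Condition~\ref*{cond:identification_mova}(i)), so it is a local diffeomorphism of $(\zT,\veT)$. Its Jacobian has full rank. Since the active rows have zero $\veT$-columns, their $\zT$-columns must carry full row rank. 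Consequently $P_T h_{\mathrm T}$ restricted to its dependence on $\zT$ is a submersion onto the active block. The construction of $h_{\mathrm V}$ repeats this with $\rii$.

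\emph{Main obstacle.} I expect ``locally invertible'' to be the difficult part. Full row rank only gives a submersion. To get a bijection between a block of $\zT$ and the active block of $\hzT$, I would first use the dual-mask constraint, which ties the active coordinates to $\mm^{\mathrm T}_t\odot\zT$. Then I would invoke the minimality of the $\ell_0$ objective: an active estimated coordinate that depended on latent coordinates outside the true block could be dropped without violating alignment. This matches the block dimension $|\cB(\hat\mT_t)|$ to the relevant latent dimension, so the submersion is square and hence invertible by the inverse function theorem. If the sparsity argument proves delicate here, I would state item 2 as ``smooth, locally invertible as a map of the full latent, restricted to the block.'' That is the reading literally needed, and it leaves the exact support matching to Lemma~\ref{lem:vt_step2_mask}.
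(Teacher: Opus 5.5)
Your Step 1 is the paper's Step B: the paper argues by contradiction, freezing $(\zV_t,\veV_t)$ and moving one coordinate of $\veT$, while you differentiate directly. Your $h_{\mathrm T}=\rtt(\cdot,\bm{\epsilon}_0)$ is the paper's $\phi_T$. You are also right that $h_{\mathrm V}$ needs the symmetric invariance in $\veV_t$; the paper's ``same argument on the visual side'' uses this without saying so.

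The gap is the invertibility claim in item 2, where two of your steps fail.

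(a) $\rtt$ maps $(\zT,\veT)\in\R^{d(\zz)+d(\veT)}$ into $\R^{d(\zz)}$, so it cannot be a local diffeomorphism when $d(\veT)>0$. Full row rank of its Jacobian is an extra premise, not a consequence of Condition~\ref{cond:identification_mova}(i).

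(b) The $\ell_0$ patch cannot make the Jacobian square. Deleting an active coordinate keeps an exact masked equality exact, so ``could be dropped without violating alignment'' holds for \emph{every} coordinate. In fact the all-zero masks are optimal for \eqref{eq:contrastive_objective_mova}. Minimality can only bound $|\cB(\hat{\mm}^{\mathrm T}_t)|$ from above, whereas squareness needs the opposite bound, ``no missing coordinates.'' Sparsity is not what confines the active block to $[\zT]_{\cB(\mm^{\mathrm T}_t)}$ either. That confinement comes from alignment plus full joint support: vary the text-only latents with the visual side fixed, which is your Step 1 trick again. It yields only $|\cB(\hat{\mm}^{\mathrm T}_t)|\le|\cB(\mm^{\mathrm T}_t)|$. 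Equality requires the contrastive ``no misses'' reasoning of Lemma~\ref{lem:vt_step2_mask}, which lies outside \eqref{eq:contrastive_objective_mova}.

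The paper does not derive squareness either. Its Step C simply posits that the restriction of $\rtt$ to the active coordinates has full block rank with respect to the semantic block, then applies the inverse function theorem. You have two options:
\begin{enumerate}
\item State full block rank as an explicit assumption, as the paper implicitly does.
\item Prove the literal full-latent version of item 2, which your submersion nearly gives. Since $\zT\mapsto P_T\rtt(\zT,\bm{\epsilon}_0)$ has full row rank, complete its active rows with complementary coordinates of $\zT$. This gives a local diffeomorphism $\tilde h_{\mathrm T}$ of $\R^{d(\zz)}$ with $P_T\tilde h_{\mathrm T}=P_T h_{\mathrm T}$.
\end{enumerate}
You genuinely need this replacement, because your $h_{\mathrm T}$ itself need not be invertible. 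For example, $\rtt(z_1,z_2,\epsilon)=(z_1,\epsilon)$ with active set $\{1\}$ has full row rank, yet $h_{\mathrm T}(\zT)=(z_1,\epsilon_0)$ is constant in $z_2$. The same completion is needed for $h_{\mathrm V}$.
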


\begin{proof}
\paragraph{Step A: Fix projectors and use alignment.}
Because mask supports are locally constant, $P_T$ and $P_V$ are fixed binary projectors in the neighborhood.
By optimality of the selective-alignment loss, \eqref{eq:vt_step1_align} holds.
Importantly, the right-hand side $P_V\,\hzV_t=P_V\,\rii(\zV_t,\veV_t)$ does not contain the text nuisance $\veT$.

\paragraph{Step B: Nuisance invariance on the active block (contradiction via partial derivatives).}
Fix any active coordinate $i$ with $[P_T]_{ii}=1$ and any nuisance coordinate $j$ of $\veT$.
Suppose, towards a contradiction, that
\(
\partial [\rtt(\zT,\veT)]_i/\partial [\veT]_j\neq 0
\)
at some $(\zT_0,\veT_0)$.
By smoothness, this partial derivative keeps a nonzero sign in a small interval of $[\veT]_j$ around $[\veT_0]_j$.
Holding $(\zV_t,\veV_t)$ fixed and varying only $[\veT]_j$ along that interval makes the left-hand side component $[P_T\,\rtt(\zT,\veT)]_i$ change strictly, while the right-hand side component $[P_V\,\rii(\zV_t,\veV_t)]_i$ remains constant (it does not depend on $\veT$), contradicting \eqref{eq:vt_step1_align}.
Hence $\partial[\hzT]_i/\partial[\veT]_j=0$ for all active $i$ and all $j$, i.e., $P_T\,\hzT$ is invariant to $\veT$.

\paragraph{Step C: Blockwise invertible reparameterizations (local diffeomorphisms on the active block).}
Define $\phi_T(\zT):=P_T\,\rtt(\zT,\veT_0)$ for any fixed $\veT_0$; this is well-defined because $P_T\,\hzT$ is independent of $\veT$ by Step~B.
By Condition~\ref*{cond:identification_mova}(i), the composite encoder $\rtt$ is smooth and locally invertible with respect to the semantic coordinates, and its restriction to the active coordinates selected by $P_T$ has full (block) rank.
Therefore, by the inverse function theorem (or constant-rank theorem), $\phi_T$ defines a local diffeomorphism between the true text latent block and its encoded image on that block.
We denote the induced reparameterization by $h_{\mathrm T}$ and obtain $P_T\,\hzT=P_T\,h_{\mathrm T}(\zT)$.
The same argument on the visual side yields $P_V\,\hzV_t=P_V\,h_{\mathrm V}(\zV_t)$.
\end{proof}

\vspace{0.25em}

\begin{assumption}[Mask Recurrence (class-based)]
\label{assump:mask_recurrence_step2}
There is a finite family of text-side masks $\{\mT^{(k)}\}_{k\in\cK}$ with $\Pr(\mT=\mT^{(k)})>0$.
For each $k$, the set $\cS_k:=\{t:\mT_t=\mT^{(k)}\}$ contains multiple samples (not necessarily adjacent in time), and conditioned on $\mT_t=\mT^{(k)}$ the visual latents/nuisances vary in a neighborhood (view diversity).
\end{assumption}

\begin{lemma}[Consistency and support-correctness of learned text masks]
\label{lem:vt_step2_mask}
Let $\rtt:=\hgT\!\circ\!\gT$ and $\rii:=\hgV\!\circ\!\gV$ and adopt the notation of Lemma~\ref{lem:vt_step1_projection}.
Assume Condition~\ref*{cond:identification_mova}(i)--(iv) and Assumption~\ref{assump:mask_recurrence_step2}.
Fix a class $k$ and write $H_T:=h_{\mathrm T}(\zT)$ and $H_{V,t}:=h_{\mathrm V}(\zV_t)$ from Lemma~\ref{lem:vt_step1_projection}.
At an optimum of the selective-alignment objective (\ref*{eq:contrastive_objective_mova}), the following hold:

\begin{enumerate}[leftmargin=1.2em,itemsep=0pt,topsep=4pt]
\item \textbf{Class-wise consistency.} For all $t_1,t_2\in\cS_k$,
\[
\hat{\mm}^{\mathrm T}_{t_1}=\hat{\mm}^{\mathrm T}_{t_2}\,.
\]

\item \textbf{Support-correctness.} Let $\mT^{(k)}$ be the true text mask of class $k$.
Then $\cB(\hat{\mm}^{\mathrm T}_{t})=\cB(\mT^{(k)})$ for all $t\in\cS_k$ (up to a block permutation).
\end{enumerate}
\end{lemma}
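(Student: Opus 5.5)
Both claims come from the same mechanism: on the active blocks, the learned masked alignment must reproduce the true dual-mask relation (\ref*{eq:vt_constraint}), and the $\ell_0$ objective in (\ref*{eq:contrastive_objective_mova}) selects among feasible mask assignments. I first reduce everything to the blockwise reparameterized latents. By Lemma~\ref{lem:vt_step1_projection}, at any optimum the active text coordinates satisfy $P_T\hzT = P_T H_T$ with $H_T=h_{\mathrm T}(\zT)$, independent of $\veT$, and likewise $P_V\hzV_t = P_V H_{V,t}$. The constraint therefore becomes a statement about the true latents only:
\[
\hat{\mm}^{\mathrm T}_t\odot H_T=\hat{\mm}^{\mathrm V}_t\odot H_{V,t},\qquad t=1,\ldots,T .
\]
The true text-side mask, composed with the reparameterization, yields a feasible candidate. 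Hence the optimal objective value is at most $\sum_t(\|\mT_t\|_0+\|\mV_t\|_0)$, up to the block permutation induced by $h_{\mathrm T}$.

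For \emph{support-correctness} I argue "no extras" and "no misses" separately.

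\emph{No extras.} Suppose $\hat{\mm}^{\mathrm T}_t$ activates a coordinate $i$ outside $\cB(\mT^{(k)})$ for some $t\in\cS_k$. Then $[H_T]_i$ depends on text latent components that the true constraint does not tie to $\zV_t$. By Condition~\ref*{cond:identification_mova}(ii) (full joint support) and (iv) (view diversity), I can vary those components while holding $\zV_t$ fixed, i.e.\ along a positive-density neighborhood. Equality on coordinate $i$ would then fail, by the same partial-derivative contradiction as Step~B of Lemma~\ref{lem:vt_step1_projection}.

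\emph{No misses.} If a true coordinate is dropped, the objective lowers the $\ell_0$ cost only by abandoning semantic content. In the constrained-limit reading of the modular contrastive loss, which is what $\eta$ encodes, the dfsm/sfdm negatives from frames in other classes become indistinguishable on the remaining support. Positive density of distinct classes (Assumption~\ref{assump:mask_recurrence_step2}) then forces a violation. To make this rigorous I would phrase it as: any strictly smaller support is matched equally well by a frame whose latent differs only on the missing coordinate, and full support guarantees such a frame exists with positive probability. This contradicts the alignment separating positives from negatives.

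\emph{Class-wise consistency} then follows. For $t_1,t_2\in\cS_k$, support-correctness gives $\cB(\hat{\mm}^{\mathrm T}_{t_1})=\cB(\mT^{(k)})=\cB(\hat{\mm}^{\mathrm T}_{t_2})$, and masks are binary, so the masks coincide. Alternatively one can argue directly. The text-side mask is a function of $\hzT$ and the frame query, and Condition~\ref*{cond:identification_mova}(iii) (temporal stability) makes the true mask constant on the segment. If the learned masks differed, replacing both with their common correct support would keep feasibility without increasing the $\ell_0$ cost, and strict minimality rules out the alternative.

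\textbf{Main obstacle.} The hard step is "no misses." The pure $\ell_0$-minimization with an equality constraint trivially admits the all-zero mask, so the argument must rely on the contrastive/negative-separation content implicit in the limit of the loss. That content is not explicit in (\ref*{eq:contrastive_objective_mova}). I would try first to formalize it as a nondegeneracy requirement: masked similarities of positives must exceed those of negatives, and full support plus view diversity make every true coordinate necessary for that separation.
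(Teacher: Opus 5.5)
You take a different route from the paper, and your argument is at least as complete as the paper's. The remaining hole (no misses) is one the paper shares.

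\textbf{How the routes differ.} The paper proves class-wise consistency first. It takes $i\in\cB(\hat{\mm}^{\mathrm T}_{t_1})\triangle\cB(\hat{\mm}^{\mathrm T}_{t_2})$ and asserts that view diversity makes $[H_T]_i=[H_{V,t_1}]_i[\hat{\mm}^{\mathrm V}_{t_1}]_i$ and $0=[H_{V,t_2}]_i[\hat{\mm}^{\mathrm V}_{t_2}]_i$ generically incompatible. Only then does it prove support-correctness of the common mask: no-extras from the sparsity penalty plus ``similarity to negatives,'' and no-misses from ``the usual margin trade-off.''

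You prove support-correctness first and read off consistency as a corollary. Your no-extras step is a feasibility argument: vary untied text components with $\zV_t$ held fixed (positive density by Condition~\ref{cond:identification_mova}(ii)), and the Step-B contradiction follows from the equality constraint alone.

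This buys something real. The paper's sparsity-based no-extras would equally switch off true coordinates, so it only works together with the margin heuristic; yours shows untied coordinates are infeasible outright. Your ordering also exposes what the paper's consistency step silently needs. If $i$ is tied by the true constraint, then $[\hat{\mm}^{\mathrm V}_{t_2}]_i=0$ together with $[H_{V,t_1}]_i=[H_T]_i$ satisfies both displayed equations. On a tied coordinate both sides are pinned by the caption, so view diversity supplies no variation. Only a no-misses principle excludes this ``miss'' at $t_2$.

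\textbf{What to tighten.}
\begin{enumerate}
\item The step ``$i\notin\cB(\mT^{(k)})$, hence $[H_T]_i$ depends on untied components'' conflates learned and true coordinates, since $h_{\mathrm T}$ mixes them. What your variation argument actually shows is that every active coordinate of $\hzT$ is a function of $[\zT]_{\cB(\mT^{(k)})}$ alone. The Jacobian of $\rtt$ in $\zT$ has full rank (Condition~\ref{cond:identification_mova}(i), as used in Lemma~\ref{lem:vt_step1_projection}). Hence at most $|\cB(\mT^{(k)})|$ learned coordinates have this property, and that set depends only on the encoder and on $k$, not on $t$. This is the precise meaning of ``up to block permutation,'' and it is what makes your derivation of consistency from support-correctness valid.
\item You are right that no-misses does not follow from \eqref{eq:contrastive_objective_mova}. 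The all-zero masks are feasible with objective value $0$, so the literal optimum violates support-correctness whenever $\mT^{(k)}\neq\mathbf{0}$. The paper's proof has exactly this gap and covers it only by invoking a contrastive margin that is not part of \eqref{eq:contrastive_objective_mova}. For either argument to be a proof, the separation/nondegeneracy requirement must be stated as an explicit hypothesis. In your route, consistency then inherits that hypothesis as well.
\item Drop your alternative direct consistency argument via Condition~\ref{cond:identification_mova}(iii). Replacing a deficient mask by the correct support increases the $\ell_0$ cost rather than preserving it. Even where the cost does not increase, a non-strict comparison cannot exclude ties.
\end{enumerate}
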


\begin{proof}
\textbf{Setup.}
By Lemma \ref{lem:vt_step1_projection}, on a support-stable neighborhood the masked alignment reads, for each $t$,
\begin{equation}\label{eq:masked_alignment_step2}
H_T\odot \hat{\mm}^{\mathrm T}_t \;=\; H_{V,t}\odot \hat{\mm}^{\mathrm V}_t,
\end{equation}
where $H_T=h_{\mathrm T}(\zT)$ depends only on $\zT$ (no $\veT$), while $H_{V,t}=h_{\mathrm V}(\zV_t)$ can vary across $t\in\cS_k$ by view diversity.

\medskip
\noindent\textbf{(1) Class-wise consistency.}
Fix $k$ and suppose, by contradiction, there exist $t_1,t_2\in\cS_k$ with $\hat{\mm}^{\mathrm T}_{t_1}\neq \hat{\mm}^{\mathrm T}_{t_2}$.
Let $\Delta:=\cB(\hat{\mm}^{\mathrm T}_{t_1})\triangle \cB(\hat{\mm}^{\mathrm T}_{t_2})$ be the symmetric difference.
Pick any $i\in\Delta$; without loss of generality assume $i\in\cB(\hat{\mm}^{\mathrm T}_{t_1})$ and $i\notin\cB(\hat{\mm}^{\mathrm T}_{t_2})$.
Taking coordinate $i$ in \eqref{eq:masked_alignment_step2}, we get
\[
[H_T]_i \;=\; [H_{V,t_1}]_i\,[\hat{\mm}^{\mathrm V}_{t_1}]_i, 
\qquad
0 \;=\; [H_{V,t_2}]_i\,[\hat{\mm}^{\mathrm V}_{t_2}]_i.
\]
By Assumption~\ref{assump:mask_recurrence_step2} (view diversity in the class), the right factors $[H_{V,t}]_i$ vary in a neighborhood across $t\in\cS_k$.
Thus, to satisfy the second equality for (almost) all such $t_2$, the only robust possibility is that $[\hat{\mm}^{\mathrm V}_{t_2}]_i=0$ on a full neighborhood; in particular the right-hand side is forced to zero generically.
But the first equality requires reproducing the \emph{same} nonzero value $[H_T]_i$ (which is independent of $t$) on some $t_1$.
Because $(H_{V,t})_{t\in\cS_k}$ explore a neighborhood, the pair of requirements is generically incompatible unless $i$ is \emph{never} selected on the text side within the class.
Consequently $\Delta$ must be empty.
Hence $\hat{\mm}^{\mathrm T}_{t_1}=\hat{\mm}^{\mathrm T}_{t_2}$ for all $t_1,t_2\in\cS_k$.

\medskip
\noindent\textbf{(2) Support-correctness (no extras, no misses).}
Write $\hat{\mm}^{\mathrm T}$ for the common class-wise text mask established above.

\emph{No extra coordinates.}
Suppose $\cB(\hat{\mm}^{\mathrm T})$ strictly contains $\cB(\mT^{(k)})$.
Let $E:=\cB(\hat{\mm}^{\mathrm T})\setminus \cB(\mT^{(k)})$ be the set of spurious text coordinates.
For any $i\in E$, the text side reveals $[H_T]_i$ in the positive pairs for class $k$, but the true cross-modal semantics in class $k$ do not require $i$.
In contrastive training, these exposed coordinates $i\in E$ (a) increase similarity to negatives that coincidentally activate $i$, and (b) are penalized by the sparsity term $\sum_t\|\hat{\mm}^{\mathrm T}_t\|_0$.
Turning them off strictly improves the objective without harming the positive alignment (since class-$k$ semantics do not need them).
Therefore the optimum cannot have $E\neq\emptyset$.

\emph{No missing coordinates.}
Suppose $\cB(\hat{\mm}^{\mathrm T})$ is a strict subset of $\cB(\mT^{(k)})$.
Let $M:=\cB(\mT^{(k)})\setminus \cB(\hat{\mm}^{\mathrm T})$ be the set of missing true coordinates.
For $i\in M$, the positive alignment in \eqref{eq:masked_alignment_step2} discards genuinely informative text dimensions $[H_T]_i$, reducing positive similarity and making hard negatives harder to separate.
Activating $i$ increases positive alignment while the sparsity cost grows only additively by $|M|$.
By the usual margin trade-off in the contrastive objective, the optimum cannot omit any $i\in \cB(\mT^{(k)})$.
Thus $M=\emptyset$.

Combining the two parts gives $\cB(\hat{\mm}^{\mathrm T})=\cB(\mT^{(k)})$ (up to a block permutation).
\end{proof}

\begin{theorem}[Block-wise identifiability for unions and intersections]
\label{thm:vt_step3_blocks}
Let $\{\mT^{(k)}\}_{k\in\cK}$ be the (true) text-side masks for the recurring classes, and let 
$\hat\mT^{(k)}$ be the corresponding learned masks established in Lemma~\ref{lem:vt_step2_mask} (class-wise consistent and support-correct, up to a block permutation).
For any finite subfamily $\cV\subseteq\cK$, define the union and intersection index sets
\[
\tilde{\cB}_{\cup}\ :=\ \bigcup_{k\in\cV}\cB(\mT^{(k)}),
\qquad
\tilde{\cB}_{\cap}\ :=\ \bigcap_{k\in\cV}\cB(\mT^{(k)}).
\]
Then the true latent sub-vectors $[\zz]_{\tilde{\cB}_{\cup}}$ and $[\zz]_{\tilde{\cB}_{\cap}}$ are identifiable
up to block-wise invertible maps; concretely, there exist smooth local reparameterizations
$h_{\cup},h_{\cap}$ such that for the text representation,
\[
[\hzT]_{\tilde{\cB}_{\cup}}=[h_{\cup}(\zT)]_{\tilde{\cB}_{\cup}},
\qquad
[\hzT]_{\tilde{\cB}_{\cap}}=[h_{\cap}(\zT)]_{\tilde{\cB}_{\cap}},
\]
and the same holds for the aligned visual blocks via the dual-mask equalities.
\end{theorem}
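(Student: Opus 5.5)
The plan is to assemble the theorem from the two preceding lemmas, working first on the text side and then transferring to the visual side. By Lemma~\ref{lem:vt_step2_mask}, for each class $k\in\cV$ the learned mask $\hat\mT^{(k)}$ is constant on $\cS_k$ and satisfies $\cB(\hat\mT^{(k)})=\cB(\mT^{(k)})$ up to a block permutation. We fix that permutation once, relabeling coordinates of $\hzT$ so the supports coincide exactly. Applying Lemma~\ref{lem:vt_step1_projection} with $P_T^{(k)}:=\mathrm{Diag}(\hat\mT^{(k)})$ then gives two facts. First, $P_T^{(k)}\hzT$ is independent of $\veT$. Second, $P_T^{(k)}\hzT=P_T^{(k)}h^{(k)}(\zT)$ for a smooth, locally invertible $h^{(k)}$ on the block $\cB(\mT^{(k)})$.

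\emph{Union.} Every $i\in\tilde\cB_\cup$ lies in some $\cB(\mT^{(k)})$, so by the first fact $[\hzT]_i$ has zero derivative with respect to $\veT$. Hence $[\hzT]_{\tilde\cB_\cup}=[\rtt(\zT,\veT_0)]_{\tilde\cB_\cup}=:[h_\cup(\zT)]_{\tilde\cB_\cup}$ is well defined. It remains to show $h_\cup$ is locally invertible as a map from $[\zz]_{\tilde\cB_\cup}$ onto its image. I would argue this in three steps:
\begin{itemize}
\item By Condition~\ref*{cond:identification_mova}(i), $\rtt$ is a local diffeomorphism in $(\zT,\veT)$.
\item Its Jacobian rows indexed by $\tilde\cB_\cup$ vanish on the $\veT$ columns, so they have rank $|\tilde\cB_\cup|$ on the $\zT$ columns alone.
\item To show these rows depend only on $[\zz]_{\tilde\cB_\cup}$ and not on the remaining latent coordinates, I would use the dual-mask equality (\ref*{eq:vt_constraint}) together with full support (ii) and view diversity (iv). Coordinates outside $\cB(\mT^{(k)})$ can vary while the frame-side block, and hence the aligned $P_T^{(k)}\hzT$, stays fixed. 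This is the standard content-block argument of \cite{von2021self,yao2023multi}, applied class by class.
\end{itemize}
With that, the inverse function theorem on $\tilde\cB_\cup$ gives $h_\cup$.

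\emph{Intersection.} Here I would follow the multi-view argument of \cite{yao2023multi}. Take $[\hzT]_{\tilde\cB_\cap}$, a subvector of each class block. By the union step applied to each single class, it is a function of $[\zz]_{\cB(\mT^{(k)})}$ for every $k\in\cV$. A smooth function that, by full support, depends only on $[\zz]_{\cB(\mT^{(k)})}$ for each $k$ must depend only on $\bigcap_k\cB(\mT^{(k)})$: its derivatives along every coordinate outside that intersection vanish, since each such coordinate lies outside at least one block. Full-rank restriction then yields local invertibility of $h_\cap$. The visual statement follows by applying $P_V$ in (\ref{eq:vt_step1_align}): on each active block, $P_V\hzV_t=P_T\hzT$, so the same reparameterizations transfer to the aligned visual coordinates.

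I expect the main obstacle to be the invertibility and exclusivity step. It has to show that the learned coordinates on a block depend \emph{only} on the true coordinates of that block and not on other semantic latents. The lemmas give nuisance invariance, but independence from off-block components of $\zT$ must be extracted from the dual-mask constraint plus full support. I would first try a derivative-contradiction argument mirroring Step~B of Lemma~\ref{lem:vt_step1_projection}: perturb an off-block latent coordinate while the frame-side block is held fixed, which full support guarantees is admissible, and conclude that the corresponding partial derivative is zero.
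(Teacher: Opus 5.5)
Your proposal is correct in outline and follows the paper's skeleton. Lemma~\ref{lem:vt_step2_mask} supplies class-wise, support-correct masks; Lemma~\ref{lem:vt_step1_projection} supplies nuisance invariance; each $i\in\tilde\cB_\cup$ is covered by some class block; then come intersections and the transfer to the visual side through the dual-mask equality. The difference lies in the substantive step. Write $\cB_k:=\cB(\mT^{(k)})$. The paper never separately shows that $[\hzT]_{\cB_k}$ is free of the off-block coordinates of $\zT$. It reads this off Lemma~\ref{lem:vt_step1_projection} (``the restriction of $H_T$ to any active coordinates is a local diffeomorphism of the corresponding true latent block''), although that lemma only proves invariance to $\veT$. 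It then handles intersections by saying the shared sub-vector is ``pinned'' by alignment across classes. You instead prove exclusivity with a Step-B-type derivative contradiction in the off-block directions of $\zT$, licensed by full support. You then get intersections from vanishing partial derivatives outside each block, which is the content-block argument of \cite{von2021self,yao2023multi}. Your route actually establishes the ``depends only on the block'' property that block-wise identifiability requires; the paper's route is shorter but only asserts it.

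Three points to tighten.

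First, Condition~\ref{cond:identification_mova}(i) constrains only the generators, not the learned encoders. Moreover, $\rtt$ maps $\R^{d(\zz)+d(\veT)}$ into $\R^{d(\zz)}$, so it cannot be a local diffeomorphism in $(\zT,\veT)$. You do not need that premise. Once exclusivity holds, the Jacobian of $[\zT]_{\tilde\cB_\cup}\mapsto[\hzT]_{\tilde\cB_\cup}$ is block-triangular over the atoms of the partition generated by the $\cB_k$. Each diagonal block is invertible because it is also a diagonal block of the invertible single-block Jacobian from Lemma~\ref{lem:vt_step1_projection}(2). The same observation gives invertibility on $\tilde\cB_\cap$.

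Second, in the exclusivity step, hold the whole frame $(\zV_t,\veV_t)$ fixed rather than only its block. This is admissible because the frame-$t$ constraint does not involve $[\zT]_j$ for $j\notin\cB_k$. It makes the right-hand side literally constant, so you do not first need visual-side exclusivity.

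Third, fixing ``the'' permutation once presumes a single relabeling consistent across all $k\in\cV$, whereas Lemma~\ref{lem:vt_step2_mask} supplies one per class. The paper relies on the same tacit assumption, but unions and intersections of index sets are not meaningful without it.
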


\begin{proof}
Let $H_T:=h_{\mathrm T}(\zT)$ and $H_{V,t}:=h_{\mathrm V}(\zV_t)$ be the block-wise reparameterizations from Lemma~\ref{lem:vt_step1_projection}.
By Lemma~\ref{lem:vt_step2_mask}, for each class $k$ there exists a learned text mask $\hat\mT^{(k)}$ whose support equals $\cB(\mT^{(k)})$ (up to a block permutation), and for any frame $t$ belonging to class $k$ we have the masked alignment
\begin{equation}\label{eq:step3_align}
\mathrm{Diag}(\hat\mT^{(k)})\,\hzT
\;=\;
\mathrm{Diag}(\hat\mV_t)\,\hzV_t
\;=\;
\mathrm{Diag}(\hat\mT^{(k)})\,H_T,
\end{equation}
where the last equality is Lemma~\ref{lem:vt_step1_projection}’s invariance on the text active block.

\paragraph{(A) Unions.}
Define the union mask (projector) $P_\cup:=\mathrm{Diag}(\1_{\tilde{\cB}_\cup})$.
Take any coordinate $i\in\tilde{\cB}_\cup$.
Then $i\in\cB(\mT^{(k)})$ for some $k\in\cV$, hence by \eqref{eq:step3_align} we have 
$[\hzT]_i=[H_T]_i$ (up to the fixed within-block permutation inherited from $\hat\mT^{(k)}$).
Since this holds for every $i\in\tilde{\cB}_\cup$,
\[
P_\cup\,\hzT \;=\; P_\cup\,H_T.
\]
By Lemma~\ref{lem:vt_step1_projection}, the restriction of $H_T$ to any active coordinates is a local diffeomorphism of the corresponding true latent block; in particular, the Jacobian of $\zT\mapsto [H_T]_{\tilde{\cB}_\cup}$ has full (block) rank in a neighborhood.
Therefore $[H_T]_{\tilde{\cB}_\cup}$ is an invertible (block-wise) reparameterization of $[\zT]_{\tilde{\cB}_\cup}$, which we denote by $[h_{\cup}(\zT)]_{\tilde{\cB}_\cup}$.
Hence $[\hzT]_{\tilde{\cB}_\cup}=[h_{\cup}(\zT)]_{\tilde{\cB}_\cup}$ and the union block is identifiable up to a block-wise invertible map.

\paragraph{(B) Intersections.}
For intersections, take any two classes $k_1,k_2\in\cV$ and consider the common index set 
$\cB^{(1,2)}:=\cB(\mT^{(k_1)})\cap\cB(\mT^{(k_2)})$.
Restricting \eqref{eq:step3_align} to $\cB^{(1,2)}$ for frames from class $k_1$ and from class $k_2$ yields
\begin{equation}
\begin{split}
    [H_T]_{\cB^{(1,2)}}
    &= [\hzT]_{\cB^{(1,2)}} \\
    &= \big[\mathrm{Diag}(\hat\mV_t)\,\hzV_t\big]_{\cB^{(1,2)}}
    \quad\text{for both }k_1\text{ and }k_2.
\end{split}
\end{equation}
Thus the same sub-vector $[H_T]_{\cB^{(1,2)}}$ is compelled by alignment across multiple classes/frames, pinning it to the text coordinates in the intersection.
By Lemma~\ref{lem:vt_step1_projection}, the restriction $\zT\mapsto[H_T]_{\cB^{(1,2)}}$ is locally invertible on that block, so we obtain identifiability of $[\zT]_{\cB^{(1,2)}}$ up to a block-wise invertible map.
Since finite intersections can be built iteratively (pairwise intersections are associative/commutative at the index-set level), this extends to $\tilde{\cB}_\cap=\bigcap_{k\in\cV}\cB(\mT^{(k)})$:
\[
[\hzT]_{\tilde{\cB}_\cap}=[H_T]_{\tilde{\cB}_\cap}=[h_{\cap}(\zT)]_{\tilde{\cB}_\cap}.
\]

\paragraph{(C) Visual side.}
For any block selected on the text side, \eqref{eq:step3_align} implies the same block is realized on the visual side via $\mathrm{Diag}(\hat\mV_t)\,\hzV_t$, so the corresponding visual sub-vectors are also identifiable up to block-wise invertible maps by Lemma~\ref{lem:vt_step1_projection}’s $h_{\mathrm V}$.

Combining (A)--(C) establishes the claim for both unions and intersections.
\end{proof}

\begin{figure*}[t]
    \hrule
    \vspace{0.1cm}
    \centering
    
    \begin{minipage}{0.55\textwidth}
    \small
    \emph{A highly stylized, \textcolor{darkgreen}{\textbf{\emph{slender meerkat}}} stands upright, serving as the central figure in a \textcolor{seafoam}{\textbf{\emph{peculiar, dimly lit environment}}}. She wears an elaborate \textcolor{darkgreen}{\textbf{\emph{long coat patchworked with tattered fabrics}}} in various colors, finished with a striking \textcolor{darkgreen}{\textbf{\emph{magenta collar and lapels}}}. The meerkat stands on a rough floor beneath a focused spotlight. Behind her, a backdrop of massive, abstract, vividly colored \textcolor{darkgreen}{\textbf{\emph{curtains}}}—yellow, blue, and burgundy—overlaid with \textcolor{darkgreen}{\textbf{\emph{stylized graffiti}}}, suggests a fantastical, tribal-inspired aesthetic. She \textcolor{orange}{\textbf{\emph{sways in place}}}, rocking from heel to toe with small shoulder rolls, tilting her head toward the light as she \textcolor{orange}{\textbf{\emph{delivers a clear, lilting melody}}}; the tune rises and falls in long, sustained phrases, tender and affecting.}
    \end{minipage}%
    \hfill
    \begin{minipage}{0.43\textwidth}
    \includegraphics[width=0.95\linewidth]{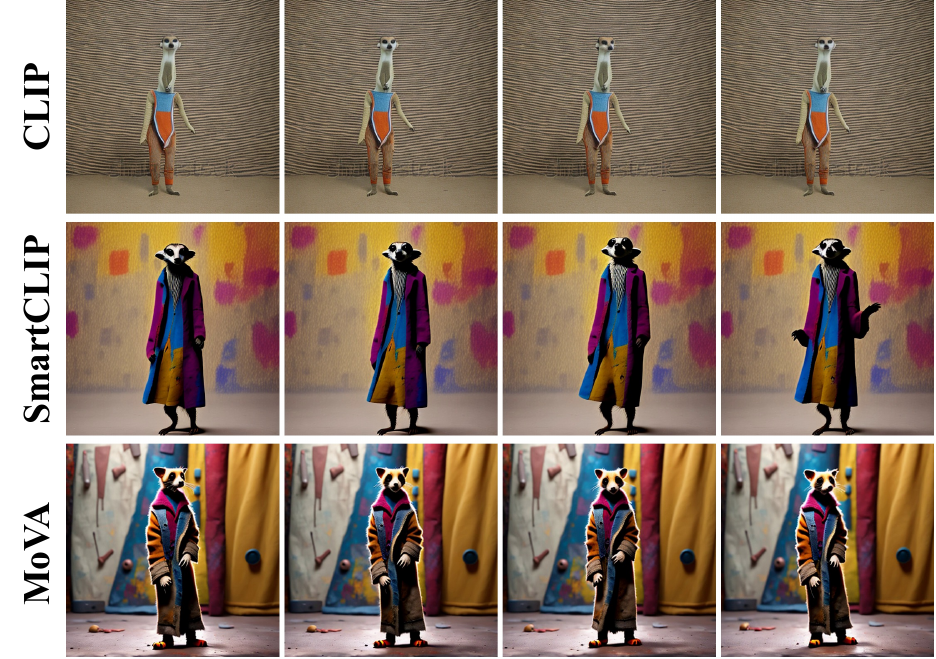}
    \end{minipage}
    
    \vspace{0.2cm}
    \hrule 
    \vspace{0.2cm}

    \begin{minipage}{0.55\textwidth}
    \small
    \emph{A highly detailed, \textcolor{darkgreen}{\textbf{\emph{anthropomorphic humanoid robot}}} on the right is engaged in \textcolor{orange}{\textbf{\emph{metalworking and assembly}}}. It wears a \textcolor{darkgreen}{\textbf{\emph{blue-black textured jacket}}} spattered with rust-colored orange and white paint or grime, giving it a worn, workmanlike aesthetic. Its head is enclosed in a \textcolor{darkgreen}{\textbf{\emph{metal welding mask}}}—heavily rusted, with complex wiring or hoses trailing from the back. The robot \textcolor{orange}{\textbf{\emph{grips}}} a \textcolor{darkgreen}{\textbf{\emph{glowing, bright-orange, intensely hot piece of material}}} while interacting with the angular metallic arm of a second, partially visible robot on the left. The scene is \textcolor{seafoam}{\textbf{\emph{industrial and gritty}}}, marked by \textcolor{darkgreen}{\textbf{\emph{heavy metal surfaces}}}, exposed wires, and a focus on mechanical labor.}
    \end{minipage}%
    \hfill
    \begin{minipage}{0.43\textwidth}
    \includegraphics[width=0.95\linewidth]{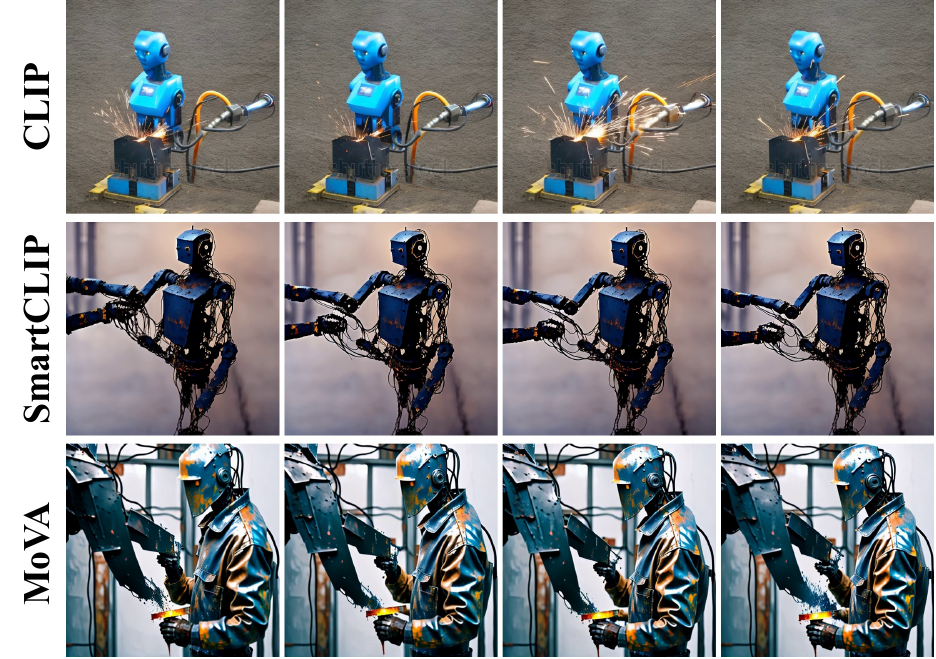}
    \end{minipage}

    \vspace{0.1cm}
    \hrule
    \vspace{0.1cm}
    
    \caption{
        \textbf{Additional Comparison Examples of Long-text-to-video generation.} 
        These examples further demonstrate MoVA's capability in handling complex descriptive prompts. 
        \textbf{Top:} The model accurately captures the stylized aesthetic, rendering the \emph{patchworked coat} and the specific motion of \emph{swaying} while singing. 
        \textbf{Bottom:} MoVA successfully generates the fine-grained details of the \emph{rusted welding mask} and the \emph{glowing material} within an \emph{industrial} atmosphere.
    }
    \label{fig:sup_qualitative} 
\end{figure*}

\section{Dataset Details}
We evaluate MoVA on five video-text retrieval datasets: ActivityNet \cite{krishna2017dense}, MSVD \cite{chen2011collecting}, DiDeMo \cite{anne2017localizing}, VideoUFO~\cite{wang2025videoufo}, and UltraVideo~\cite{xue2025ultravideo}.

\paragraph{ActivityNet}~\cite{krishna2017dense} contains $20{,}000$ YouTube videos and $100{,}000$ captions (849 hours in total) covering $200$ activity types, with an average length of $153$ seconds. Following \cite{luo2022clip4clip}, we concatenate the multiple descriptions per video into a single paragraph for paragraph-to-video retrieval. We train on $10{,}000$ videos and evaluate on the \texttt{val1} split with $5{,}000$ videos.

\paragraph{MSVD}~\cite{chen2011collecting} includes $1{,}970$ short clips ($1$--$62$ seconds) and about $40$ captions per video. We adopt the standard split of $1{,}200$/$100$/$670$ videos for train/val/test and report retrieval under the multi-caption evaluation protocol~\cite{luo2022clip4clip}.

\paragraph{DiDeMo}~\cite{anne2017localizing}  consists of $10{,}000$ videos and $40{,}000$ captions (average length $\sim30$ seconds). Similar to ActivityNet, we perform paragraph-to-video retrieval. The train/val/test sets contain $8{,}395$/$1{,}065$/$1{,}004$ videos, and results are reported on the test split.

\paragraph{VideoUFO}~\cite{wang2025videoufo} is a million-scale user-focused benchmark with $1.09$M clips across $1{,}291$ topics at $720$p resolution. Captions average $155.5$ words, while videos average $12.6$ seconds (about $3.5$K hours in total), posing retrieval challenges due to both scale and description richness. We randomly sample $95\%$ ($1{,}037{,}126$) clips for training and $5\%$ ($54{,}586$) for testing.

\paragraph{UltraVideo}~\cite{xue2025ultravideo} targets high-quality text-to-video research with 4K/8K footage and structured captions. It contains $17$K clips (143 hours) with $30.9$-second videos on average and very long descriptions (average $850.3$ words). We use the UltraVideo-Long subset and refer to it as UltraVideo for brevity, taking the first $75\%$ ($12{,}447$) clips from the original caption annotation file for training and the remaining $25\%$ ($4{,}150$) for testing.

Taken together, these datasets span short clips with concise captions to ultra-high-resolution videos paired with hundred-word paragraphs, offering a comprehensive testbed for scaling MoVA’s alignment ability across length, resolution, and topic breadth. VideoUFO and UltraVideo (introduced to video-text retrieval for the first time in this work) deliberately stress long, information-dense captions: short captions can retrieve many near-duplicate results, whereas long captions challenge the model to extract effective signals while avoiding loss of key concepts. The rich annotations also encourage multi-task reuse of the learned video–text representations, such as long-text-to-video generation.

\section{Additional Experiments}
\paragraph{Additional quantitative results.} Table~\ref{tab:retrieval_results_didemo} reports DiDeMo retrieval when every method uses a weaker ViT-B/32 backbone. MoVA still outperforms the strongest baseline (CLIP-VIP) by $+3.0$ R@1 and halves MnR (7.8 vs.\ 13.5) without leveraging auxiliary video–subtitle or video–caption corpora—we only fine-tune on the given benchmark. In Table~\ref*{tab:retrieval_results_t2v}, MoVA likewise surpasses high-resolution models such as VideoCLIP-XL~\cite{wang2024videoclip}, showing that the modular alignment remains effective without resorting to larger encoders.
\begin{table}[htbp]
\centering
\footnotesize
\caption{Text-to-video retrieval performance on the DiDeMo dataset (ViT-B/32 as backbone). ``$\uparrow$'' denotes that higher is better. ``$\downarrow$'' denotes that lower is better.}
\label{tab:retrieval_results_didemo}
\vspace{-0.7em}

\resizebox{0.5\linewidth}{!}{
\begin{tabular}{l|cccc}
\toprule[1.5pt]
\textbf{Method} & R@1$\uparrow$ & R@5$\uparrow$ & MdR$\downarrow$ & MnR$\downarrow$ \\ \midrule
CLIP4Clip~\cite{luo2022clip4clip} & 42.8 & 68.5 & 2.0 & 18.9 \\
X-CLIP~\cite{ma2022x} & 45.2 & 74.0 & 2.0 & 14.6 \\
CLIP-VIP~\cite{xue2022clip} & 47.4 & 75.2 & 2.0 & 13.5 \\
UCOFIA~\cite{wang2023unified} & 46.5 & 74.8 & 2.0 & 13.1 \\
ProST~\cite{Li2023ProgressiveSP} & 44.9 & 72.7 & 2.0 & 13.7 \\
\midrule
 \textbf{MoVA (Ours)} & \Frst{50.4} & \Frst{82.1} & \Frst{1.0} & \Frst{7.8} \\
\bottomrule[1.5pt]
\end{tabular}
}
\vspace{0.3em}
\end{table}

\begin{table}[htbp]
    \centering
    \footnotesize
    \caption{Quantitative comparison of total parameter counts (in Millions).}
    \label{tab:activitynet_params}
    \begin{tabular}{l|c}
        \toprule[1.5pt]
        \textbf{Method} & \textbf{Params (M)} \\
        \midrule
        CLIP~\cite{radford2021learning} & 149.6 \\
        SmartCLIP~\cite{Xie2025SmartCLIPMV} & 153.0 \\
        \midrule
        CLIP4Clip~\cite{luo2022clip4clip} & 162.3 \\
        DRL~\cite{wang2022disentangled} & 162.8 \\
        X-CLIP~\cite{ma2022x} & 162.8 \\
        ProST~\cite{Li2023ProgressiveSP} & 177.9 \\
        DiCoSA~\cite{Jin2023TextVideoRW} & 162.3 \\
        VideoCLIP-XL~\cite{wang2024videoclip} & 427.9 \\
         MoVA (Ours) & 174.7 \\
        \bottomrule[1.5pt]
    \end{tabular}
\end{table}

\paragraph{Parameters and training cost.} Table~\ref{tab:activitynet_params} reports total parameters. The Concept Mask Network adds $11{,}099{,}650$ ($\approx11.1$M) parameters and the Temporal Mask Network adds $7{,}946{,}753$ ($\approx7.9$M), bringing MoVA to $174.7$M—about a 7.6\% increase over CLIP4Clip ($162.3$M) yet far below VideoCLIP-XL ($427.9$M, from its released ViT-L/14 checkpoint). Compared with ProST ($177.9$M), MoVA uses roughly $1.8\%$ fewer parameters while delivering stronger retrieval and concept disentanglement. On an 8$\times$MI210 node, training ActivityNet for one epoch takes $76.7$ minutes for MoVA versus $92.9$ for CLIP4Clip, showing the added modules keep memory and compute overhead acceptable relative to the gains.

\paragraph{Ablations.} Tables~\ref{tab:sup_ablation} and~\ref{tab:sup_ablation_blocks} study frame counts and TMN block numbers. Performance is stable across frame budgets (32--160), peaking near 96 frames, indicating the model does not rely on blindly scaling frames. Increasing TMN blocks beyond a shallow depth does not yield further gains, so MoVA remains efficient without over-deep temporal masking. In addition, both $\Lsfdm$ and $\Ldfsm$ are necessary, as removing $\Ldfsm$ alone reduces ActivityNet R@1 by $3.6$. Furthermore, soft mask variant (same $L_1$ sparsity) is comparable (ActivityNet R@1/5/10 47.4/77.3/86.6; DiDeMo R@1 57.2; UltraVideo R@1 58.3). Hard masks remain default because of better interpretability.

\begin{table}[htbp]
\centering
\caption{\textbf{Ablation study on the number of input frames.} Evaluated on the ActivityNet dataset.}
\label{tab:sup_ablation} 
\vspace{-0.7em}
\small
\resizebox{0.6\textwidth}{!} 
{

\begin{tabular}{c|cccc|cccc} 
\toprule[1.2pt]
\multirow{2}{*}{{\begin{tabular}[c]{@{}c@{}}Frames\end{tabular}}} & \multicolumn{4}{c|}{Text $\rightarrow$ Video} & \multicolumn{4}{c}{Video $\rightarrow$ Text} \\ %
 & R@1$\uparrow$ & R@5$\uparrow$ & R@10$\uparrow$ & MnR$\downarrow$ & R@1$\uparrow$ & R@5$\uparrow$ & R@10$\uparrow$ & MnR$\downarrow$ \\ \hline

32 & 46.7 & 75.6 & 86.3 & 7.1 & 45.3 & 75.0 & 86.9 & 7.4 \\
64 & 47.2 & 76.4 & 86.2 & 6.6 & 45.1 & 75.4 & 87.1 & 6.6 \\
96 & 47.9 & 77.3 & 86.9 & 6.9 & 46.7 & 76.2 & 86.9 & 6.3 \\
128 & 47.8 & 77.4 & 87.0 & 7.0 & 46.7 & 76.6 & 86.8 & 6.3 \\
160 & 46.6 & 76.1 & 87.2 & 6.9 & 45.2 & 76.1 & 86.9 & 6.4 \\
\bottomrule[1.2pt]
\end{tabular}
}

\end{table}

\begin{table}[htbp]
\centering
\caption{\textbf{Ablation study on the number of TMN blocks.} Evaluated on the ActivityNet dataset. The Temporal Mask Network depth is varied while keeping other settings fixed.}
\label{tab:sup_ablation_blocks} 
\vspace{-0.7em}
\small
\resizebox{0.6\textwidth}{!} 
{

\begin{tabular}{c|cccc|cccc} 
\toprule[1.2pt]
\multirow{2}{*}{{\begin{tabular}[c]{@{}c@{}}Blocks\end{tabular}}} & \multicolumn{4}{c|}{Text $\rightarrow$ Video} & \multicolumn{4}{c}{Video $\rightarrow$ Text} \\ %
 & R@1$\uparrow$ & R@5$\uparrow$ & R@10$\uparrow$ & MnR$\downarrow$ & R@1$\uparrow$ & R@5$\uparrow$ & R@10$\uparrow$ & MnR$\downarrow$ \\ \hline

1 & 47.8 & 77.0 & 87.0 & 6.7 & 46.8 & 77.0 & 87.6 & 6.2 \\
2 & 47.8 & 77.4 & 87.0 & 7.0 & 46.7 & 76.6 & 86.8 & 6.3 \\
3 & 46.6 & 76.2 & 86.4 & 6.5 & 45.3 & 76.2 & 87.0 & 6.5 \\
4 & 47.8 & 76.9 & 87.1 & 6.3 & 46.8 & 76.8 & 87.1 & 6.2 \\
8 & 47.0 & 76.6 & 87.0 & 6.1 & 45.7 & 76.6 & 87.5 & 5.9 \\
\bottomrule[1.2pt]
\end{tabular}
}

\end{table}

\paragraph{More visualization examples.} Figure~\ref{fig:sup_qualitative} shows two long-prompt generations. The first captures a stylized meerkat in a dim, curtain-lined set with a patchworked coat, magenta collar, and subtle swaying while singing. The second depicts an industrial welding scene with a rusted mask, blue-black jacket, glowing orange material, and gritty metal surroundings. Beyond static fidelity, MoVA preserves temporal cues (e.g., rhythmic swaying, welding interaction) without drifting or collapsing motion, and avoids artifacts like the frame-level watermarks often seen when vanilla CLIP encoders are fine-tuned on video-text data. Similar to Figure~\ref*{fig:example_t2v}, we omit other video-text fine-tuned baselines (e.g., CLIP4Clip, DGL) because after the same number of iterations they lose structured text-to-video guidance, often producing near-identical outputs for different texts or misaligned videos.

\end{document}